%% file: main.tex
\documentclass{article} 
\usepackage{iclr2027_conference,times}

\input{math_commands.tex}

\usepackage{hyperref}
\usepackage{url}
\usepackage{amsthm}
\usepackage{amsmath}
\usepackage{amssymb}
\usepackage{mathtools}
\usepackage{xspace}
\usepackage{wrapfig}
\usepackage{tabularx}
\usepackage{booktabs}
\usepackage{adjustbox}
\usepackage{tikz}
\usetikzlibrary{arrows.meta}
\usetikzlibrary{positioning}
\theoremstyle{plain}
\newtheorem{theorem}{Theorem}[section]
\newtheorem{proposition}[theorem]{Proposition}
\newtheorem{lemma}[theorem]{Lemma}
\newtheorem{corollary}[theorem]{Corollary}
\theoremstyle{definition}

\usepackage[most]{tcolorbox}
\usepackage{xcolor}         
\usepackage[dvipsnames]{xcolor}
\usepackage[table]{xcolor}
\usepackage{graphicx}
\usepackage{wrapfig}
\usepackage{subcaption}
\usepackage{multirow}
\usepackage{pifont}

\newcommand{\cmark}{\textcolor{ForestGreen}{\ding{51}}}
\newcommand{\xmark}{\textcolor{BrickRed}{\ding{55}}}
\newcommand{\circlednum}[1]{%
\tikz[baseline=(char.base)]{
\node[shape=circle,fill=black,text=white,inner sep=1.2pt,font=\scriptsize\bfseries] (char) {#1};
}}

\newcommand{\methodlong}{\underline{\textbf{t}}ractable \underline{\textbf{r}}epresentations for pre\underline{\textbf{i}}mage learning and inverse \underline{\textbf{o}}ptimization\xspace}
\newcommand{\method}{\mbox{TRIO}\xspace}
\newcommand{\Methodlong}{\underline{\textbf{T}}ractable \underline{\textbf{R}}epresentations for Pre\underline{\textbf{i}}mage Learning and Inverse \underline{\textbf{O}}ptimization\xspace}

\title{Representation Learning for Exact Preimages}

\author{
\textbf{Konstantin Hess}\textsuperscript{1,2,*},
\textbf{Stefan Feuerriegel}\textsuperscript{1,2}\\[0.8em]
\textsuperscript{1}LMU Munich \quad
\textsuperscript{2}Munich Center for Machine Learning \quad
\\
\textsuperscript{*}{Corresponding author: \texttt{k.hess@lmu.de}}
}

\iclrfinalcopy
\begin{document}

\maketitle

\begin{abstract}

Modern neural predictors can model highly nonlinear maps, but many scientific and engineering tasks require reasoning in the opposite direction: given a performance or safety level, the goal is to characterize the preimage, that is, the complete set of inputs which meet the desired target level and optimize over that set. For expressive neural predictors, however, such preimages typically have no explicit representation and are expensive to recover or optimize over. This creates a fundamental three-way challenge between {expressive forward prediction}, {accurate preimage approximation}, and {tractable optimization over the preimage for downstream tasks}. We introduce \method~({\methodlong}), a framework for learning representations that make these objectives compatible by construction. Our key contribution is a \emph{preimage factorization}: the forward model remains expressive through nonlinear radial transformations (including neural networks), while, under inversion, each transformation reduces to a single scalar radius, which yields simple geometric level sets. This yields an explicit geometric representation that is reusable for downstream optimization over the preimage, and, for linear objectives, we show that this admits a closed-form global solution. We finally prove a universal approximation theorem which shows that \method can approximate any continuous forward map and its entire family of potentially disconnected, nonconvex preimages arbitrarily well. Hence, \method combines expressive forward modeling, exact preimage recovery, and tractable global downstream optimization over preimages by design.

\end{abstract}

\begin{wrapfigure}{r}{0.48\textwidth}
\vspace{-1.3cm}
\centering
\includegraphics[width=0.48\textwidth, trim=1.8cm 17.5cm 3.4cm 2.5cm, clip]{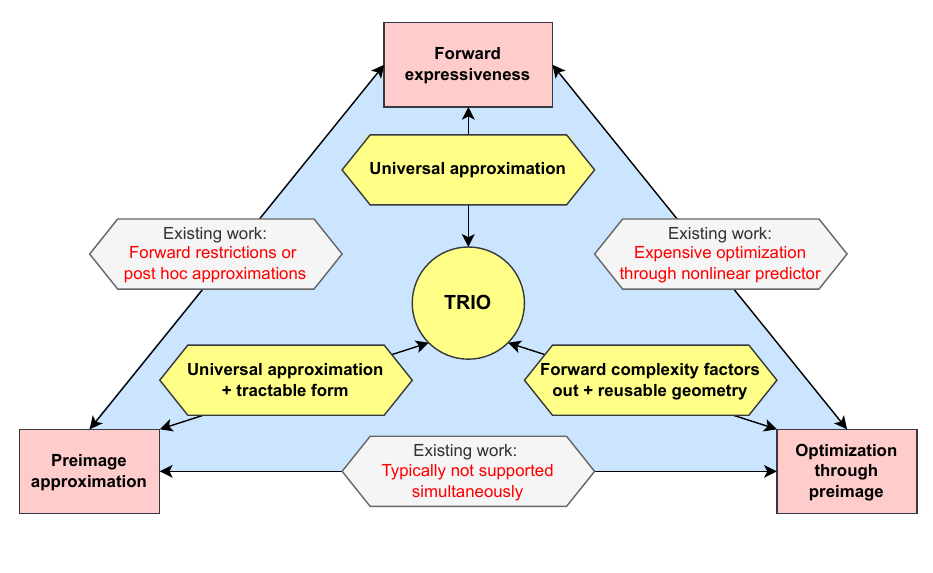}
\vspace{-0.8cm}
\caption{\textbf{Filling the gap:} \method combines forward expressiveness with exact preimage approximation and tractable inverse optimization.}
\label{fig:contribution}
\vspace{-0.4cm}
\end{wrapfigure}

\section{Introduction}\label{sec:intro}
Many problems in science and engineering require not only predicting the outcome of a given input, but instead identifying the full set of inputs that satisfy a desired performance or safety requirement. For example, in inverse design, safety, and backward reachability, the goal is to find all designs that meet a prescribed target level under a learned surrogate, and then optimize over the resulting preimage~\citep{bhosekar2018advances,kotha2023provably,lee2023inverse}.

However, this type of backward reasoning is difficult for expressive predictors: the forward map may be easy to evaluate, while its preimages are typically not available in explicit form and costly to recover or optimize over~\citep{ceccon2022omlt,hoang2024surrogates,kotha2023provably}. This creates a \emph{fundamental three-way challenge between expressive forward prediction, accurate preimage approximation, and tractable downstream optimization over the preimage}. Existing work has largely addressed these challenges along three separate directions: (i)~recovering preimages post hoc from trained predictors \citep[e.g.,][]{dathathri2019inverse,kotha2023provably,bjorklund2025premap2}; (ii)~expensive inverse optimization through learned predictors \citep[e.g.,][]{bergman2022janos,ceccon2022omlt,schweidtmann2019deterministic}; and (iii) designing restricted model architectures that make inverse reasoning more tractable~\citep[e.g.,][]{amos2017icnn,behrmann2019invertible,papamakarios2021flows}. 
Yet, an approach that addresses all challenges simultaneously is still missing (see Figure~\ref{fig:contribution}).  

We introduce \method~(\methodlong), a framework that combines expressive forward prediction with exact preimage representation and tractable downstream optimization. For any target chosen after training, \method represents the complete learned preimage as an explicit finite union of ellipsoidal regions, even when the predictor is non-injective and the preimage is disconnected or nonconvex. Unlike approaches that recover preimages post hoc, repeatedly optimize over the predictor, or impose convexity or invertibility for tractability, \method provides an explicit preimage representation that can be reused for set queries and downstream optimization.

The key idea behind \method is a \emph{preimage factorization}: \method uses expressive radial transformations for the forward prediction, while their complexity factors out when computing the preimage. Specifically, in the backward pass, \method combines radial components through a union operation, while each radial transformation collapses into a single scalar radius. The result is an exact finite geometric preimage even when the radial transformations are parametrized by neural networks. Importantly, this construction separates forward expressiveness from the complexity of preimage computation.

Our contributions are fivefold\footnote{Code is available at \url{https://github.com/konstantinhess/TRIO_representations}.}: 
\textbf{(1)}~We introduce \method, a framework that unifies expressive forward prediction, accurate preimage approximation, and tractable downstream optimization over the preimage. \textbf{(2)}~We prove that the preimage factorization is exact: every target level yields a finite union of ellipsoidal regions, while the complexity of the forward radial models factors out of subsequent preimage reasoning. \textbf{(3)}~We show that this geometry enables globally exact downstream optimization, and, for linear objectives, this even admits a closed-form global solution. \textbf{(4)}~We establish a universal approximation theorem to show that \method can approximate any continuous forward map and the complete family of target-dependent preimages arbitrarily well. \textbf{(5)}~We show empirically that \method learns accurate forward predictors and preimages while substantially accelerating exact inverse optimization over baselines.

\section{Related Work}\label{sec:rw}
Prior work has largely studied the tension between expressive forward prediction, preimage approximation, and tractable downstream optimization along three three separate directions that we review below: (i)~recovering preimages from trained predictors, (ii)~solving inverse optimization problems through learned predictors, and (iii)~designing architectures that make inverse reasoning more tractable.

\textbf{(i) Preimage approximation:}~\emph{How can we recover the whole inverse set of a trained network?}
For a non-injective predictor, a given target level generally corresponds \emph{not} to a single input but to an entire, potentially disconnected set of inputs. Recovering this set from a generic neural network is computationally challenging. Early methods construct under- and over-approximations by propagating output constraints backward through the network~\citep{dathathri2019inverse}. More recent approaches refine these approximations through branch-and-bound~\citep{kotha2023provably}, relaxations~\citep{zhang2025premap,bjorklund2025premap2}, probabilistic approximation~\citep{marzari2026probabilistic}, or targeted refinement procedures~\citep{koller2026shadows}. 
$\Longrightarrow$  \emph{These methods aim to approximate the preimage post hoc at substantial computational cost; \method has an explicit representation of the learned preimage by construction.}

\textbf{(ii)~Inverse optimization subject to learned predictors:}~\emph{How can we optimize over inputs when a trained predictor determines the constraint?}
A direct approach is to embed the learned forward model into the downstream optimization problem and search over its inputs~\citep{bergman2022janos,ceccon2022omlt}. Depending on the model class, this may involve gradient-based nonlinear optimization~\citep{wachter2006implementation}, mixed-integer formulations for piecewise-linear networks~\citep{anderson2020strong,tsay2021partition}, or deterministic global optimization through relaxations and spatial branch-and-bound~\citep{schweidtmann2019deterministic,bestuzheva2025scip}. Across these approaches, the feasible set remains defined implicitly by the learned predictor, so each new downstream objective requires solving a new complex optimization problem through the nonlinear model. 
$\Longrightarrow$ \emph{These methods optimize over a feasible set defined implicitly by the predictor; \method instead represents this set explicitly, which removes the nonlinear predictor from subsequent downstream optimization.}

\textbf{(iii)~Customized architectures for tractable inverse reasoning:}~\emph{Can the predictor itself be designed to make backward reasoning tractable?}
Input convex neural networks (ICNNs) impose convexity to enable efficient downstream optimization~\citep{amos2017icnn,pfrommer2023asymmetric}, but this restricts their sublevel sets to convex geometry. Invertible neural networks~\citep{jacobsen2018irevnet,behrmann2019invertible} impose bijectivity, so each output corresponds to a single input, which is inapplicable in settings where many inputs can attain the same output. Normalizing flows similarly rely on invertible transformations~\citep{dinh2017realnvp,papamakarios2021flows}; when used for inverse reasoning, they can represent distributions over possible inputs, but do not provide the complete feasible preimage as an explicit set. 
$\Longrightarrow$ \emph{These methods enforce convexity or bijectivity for tractability reasons; \method allows for non-injective, non-convex mappings and disconnected preimages}.

\textbf{Research gap:} Methods that jointly address (i)--(iii) are missing. To fill this gap, \method learns expressive predictors with exact, explicit, and reusable learned preimages, while preserving universal forward and backward approximation, and reducing downstream global optimization to computationally tractable optimization over ellipsoidal regions. 

\section{Setup}\label{sec:setup}

\begin{wrapfigure}{r}{0.4\textwidth}
\vspace{-1.6cm}
\centering
\includegraphics[width=0.4\textwidth, trim=3.5cm 20.5cm 9.5cm 3.5cm, clip]{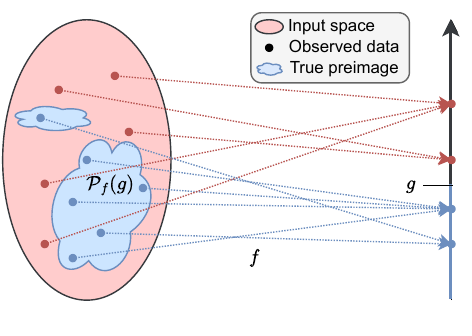}
\vspace{-0.8cm}
\caption{\textbf{Computing the preimage:} \method allows for non-injective, non-convex mappings and disconnected preimages.}
\label{fig:preimage}
\vspace{-0.8cm}
\end{wrapfigure}

\textbf{Notation:} Let $\mathcal{X}\subseteq\mathbb{R}^d$ be the admissible input domain, and let $f:\mathcal{X}\rightarrow\mathbb{R}$ denote an unknown, potentially nonlinear and non-injective function. We observe training data $\mathcal{D}=\{(x_i,y_i)\}_{i=1}^N$, with $y_i=f(x_i)$.

For a target level $g\in \overline{\mathbb{R}}=\mathbb{R}\cup \{-\infty, \infty\}$ and any scalar map $h:\mathcal{X}\rightarrow\mathbb{R}$, we define its sublevel set
\begin{align}
\mathcal{P}_h(g)
\coloneqq
\{x\in\mathcal{X}:h(x)\le g\}
=
h^{-1}((-\infty,g]).
\end{align}

Thus, $\mathcal{P}_h(g)$ gives the preimage of the lower output set $(-\infty,g]$. We refer to $\mathcal{P}_h(g)$ simply as the \emph{preimage} at level $g$ throughout the paper. The sublevel sets are nested by definition; i.e., $g_1\le g_2$ implies $\mathcal{P}_h(g_1)\subseteq\mathcal{P}_h(g_2)$. Any such preimage mapping induces a Galois connection, which has a natural order-theoretic interpretation (see Supplement~\ref{sec:galois}).

\textbf{Problem statement:} We aim to learn a predictor $F_\theta$ that supports three tasks: (i)~accurate forward prediction of $f$, (ii)~approximation of the corresponding preimages ${P}_f(g)$, and (iii)~tractable downstream optimization over these preimages. 
Importantly, we make \emph{no injectivity or convexity assumption}; hence, a preimage may be nonconvex and disconnected.

Given an objective $J:\mathcal{X}\rightarrow\mathbb{R}$ (e.g., cost or risk), we aim to learn the following tasks:
\begin{tcolorbox}[colback=White!8,colframe=Black!75!black,boxrule=0.8pt,arc=1.5mm,left=1.2mm,right=1.2mm,top=1mm,bottom=1mm,fonttitle=\bfseries,coltitle=white,enhanced,breakable] 
\vspace{-0.3cm}
\begin{align}
    \circlednum{1}\;\underbrace{f:\mathcal X \to \mathbb R,}_{\text{forward model}} \qquad  \qquad \circlednum{2}\;\underbrace{\mathcal{P}_f(g) \text{ for } g\in \mathbb R}_{\text{preimage}} \qquad 
    \circlednum{3}\;\underbrace{\min_{x\in\mathcal{X}} J(x) \; \text{s.t.} \; f(x)\le g }_{\text{downstream optimization}}.
\end{align}
\end{tcolorbox}
An equivalent objective for the latter is $\min \{J(x): \; x\in\mathcal{P}_f(g)\}$. These tasks are common in inverse design, safety, and surrogate-based optimization~\citep{kotha2023provably,ceccon2022omlt}.

Our aim is to learn a predictor that combines \emph{(i) expressive forward modeling} with \emph{(ii) exact preimage representation}, and \emph{(iii) tractable downstream optimization over the preimage}. Here, $F_\theta$ captures the nonlinear structure of $f$, while its induced preimages $\mathcal{P}_{F_\theta}(g)$ admit exact\footnote{Throughout, \emph{exact} refers to the learned predictor $\widehat{\mathcal{P}}(g)=\mathcal{P}_{F_\theta}(g)$. This structural guarantee is different from agreement with the unknown ground-truth preimage $\mathcal{P}_f(g)$.}, explicit, and reusable representations that support efficient querying and globally exact downstream optimization. \emph{Hence, unlike prior work, we aim to do so (i)~\textbf{without} post-hoc preimage recovery or separate inverse models, (ii)~\textbf{without} repeatedly optimizing through the learned predictor, and (iii)~\textbf{without} imposing invertibility or convexity on the forward map.}

\section{\method}

We now introduce \method and provide our main theoretical properties. \textbf{Section~\ref{sec:prefact}} defines the predictor class, and shows that, for every target level, the corresponding preimage has an exact finite representation in which expressive forward transformations factor out, and therefore removes the complexity during subsequent backward reasoning. \textbf{Section~\ref{sec:optimization}} uses the preimage representation to obtain tractable global downstream optimization over the learned preimage, and shows that this yields closed-form global solutions for linear objectives. Finally, \textbf{Section~\ref{sec:universal_approximation}} establishes a universal approximation theorem, thus showing \method can approximate any continuous forward function and the entire resulting family of preimages arbitrarily well.

\subsection{Exact preimage factorization with \method}
\label{sec:prefact}
\textbf{Key idea:} The key to \method is a \emph{preimage factorization}: We parameterize forward expressiveness through radial transformations, such that its complexity factors out when computing the preimage.

\begin{wrapfigure}{r}{0.6\textwidth}
\vspace{-0.5cm}
\centering
\includegraphics[width=0.6\textwidth, trim=0.5cm 16.6cm 0.5cm 3cm, clip]{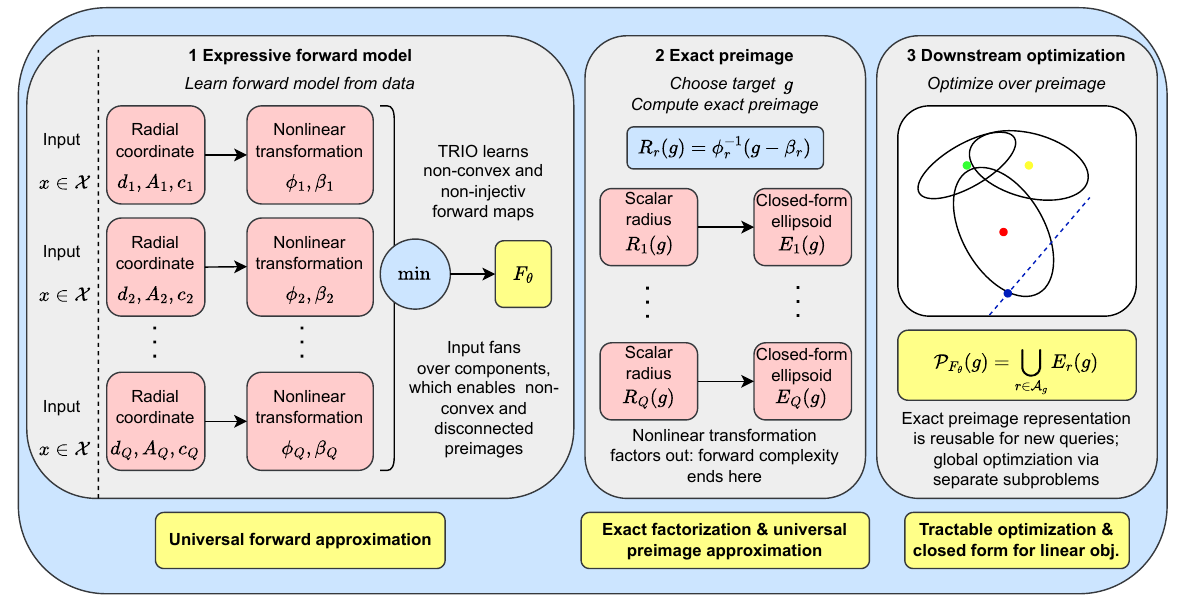}
\vspace{-0.4cm}
\caption{\textbf{\method:} The forward predictor is learned from data; for a chosen target, the preimage can be obtained excatly, which enables downstream optimziation over independent subproblems.}
\label{fig:method}
\vspace{-0.4cm}
\end{wrapfigure}

Instead of using an unrestricted predictor for which preimages must later be recovered post hoc, we parameterize the forward model through {expressive nonlinear transformations of learned radial coordinates with explicitly controlled geometry}. Here, monotonicity along each radial coordinate ensures that, for a fixed target level, each transformation reduces to a simple scalar radius. Therefore, they completely factor out of the spatial preimage representation.

For a fixed target level, each radial component induces an ellipsoidal sublevel set. In the backward operation, the minimum envelope over these components then turns into a union of the corresponding ellipsoids, which allows \method to represent disconnected and nonconvex preimages. This construction preserves flexibility in the forward model while providing an exact and reusable representation of its preimages.


\begin{tcolorbox}[
colback=White!8,
colframe=Green!75!black,
boxrule=0.8pt,
arc=1.5mm,
left=2mm,
right=2mm,
top=1.5mm,
bottom=1.5mm,
fonttitle=\bfseries,
coltitle=white,
title={\Methodlong (\method)},
enhanced,
breakable
]

\method defines
\vspace{-.6cm}
\begin{align}
F_\theta(x)
=
\min_{r=1,\ldots,Q}
\left[
\beta_r+\phi_r(d_r(x))
\right],
\label{eq:prefact}
\end{align}
for $Q$ radial components, where each component applies a radial transformation $\phi_r$ along the learned radial coordinate
\begin{align}
d_r(x)
\coloneqq
\sqrt{(x-c_r)^\top A_r(x-c_r)}.
\label{eq:spd_distance}
\end{align}

\noindent

\medskip

The bias terms are given by $\beta_r\in\mathbb{R}$; each radial transformation $\phi_r:[0,\infty)\rightarrow[0,\infty)$ satisfies
$\phi_r(0)=0$ and is continuous, unbounded, and strictly increasing; the locations are $c_r\in\mathbb{R}^d$; and the spatial geometries are controlled by $A_r\succ0$. The outer minimum couples the transformations into a nonconvex and non-injective predictor which allows for disconnected preimages.
\end{tcolorbox}

Intuitively, the construction provides flexibility along two complementary axes. First, each component learns its own center and anisotropic geometry, and the outer minimum allows different components to dominate in different parts of the input space. Their target-dependent sublevel sets therefore combine into expressive unions of ellipsoidal regions, which allows for disconnected and nonconvex preimages. Second, along each learned radial coordinate $\phi_r$, each radial component can use a flexible nonlinear transformation $\phi_r$, including neural networks, without complicating the backward geometry.

The above specification has direct implications for the inversion step: The preimage structure of \method is determined entirely by the predictor, \textbf{not} by a specialized training procedure. All parameters are learned jointly from the input data $\{(x_i,y_i)\}_{i=1}^N$ using standard empirical risk minimization such as mean squared error. Hence, \emph{no inverse model, target-specific training, or expensive post-hoc preimage computation is required.}

\textbf{Factoring out forward complexity in preimage computation:}
For a fixed target level $g$, the minimum envelope in \Eqref{eq:prefact} turns the preimage into a union over components, while monotonicity reduces each radial transformation $\phi_r$ to a single target-level-dependent radius. The following theorem shows that this factorization is exact.

\begin{tcolorbox}[colback=White!8,colframe=Green!75!black,boxrule=0.8pt,arc=1.5mm,left=1.2mm,right=1.2mm,top=1mm,bottom=1mm,fonttitle=\bfseries,coltitle=white,enhanced,breakable] 
\begin{theorem}[Exact preimage factorization]\label{thm:exact_prefact}
For any target level $g$, we define the active  components $\mathcal{A}(g)\coloneqq\{r:g\geq\beta_r\}$,
and, for each $r\in\mathcal{A}(g)$, the radius
$R_r(g)\coloneqq\phi_r^{-1}(g-\beta_r)$. Then, the learned preimage is exactly
\begin{align}
\mathcal P_{F_{\theta}}(g) = 
\bigcup_{r\in\mathcal{A}(g)}
E_r(g)
\end{align}
with
$
E_r(g)
\coloneqq
\left\{
x\in\mathcal{X}:
(x-c_r)^\top A_r(x-c_r)
\leq
R_r(g)^2
\right\}
$. 
Hence, every target level induces $g$ an exact finite union of at most $Q$ ellipsoidal regions.
\end{theorem}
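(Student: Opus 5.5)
The argument is a direct double inclusion, and it rests on two facts. First, a sublevel set of a finite minimum is the union of the individual sublevel sets. Second, each component's sublevel set is an ellipsoid, because $\phi_r$ is an order isomorphism.

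\textbf{Step 1: unpack the minimum.} Since $Q$ is finite, the minimum in \Eqref{eq:prefact} is attained. Hence $F_\theta(x)\le g$ holds if and only if there exists $r$ with $\beta_r+\phi_r(d_r(x))\le g$. This gives
\begin{align}
\mathcal P_{F_\theta}(g)=\bigcup_{r=1}^{Q}\{x\in\mathcal X:\phi_r(d_r(x))\le g-\beta_r\}.
\end{align}

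\textbf{Step 2: discard the inactive components.} Suppose $r\notin\mathcal A(g)$, so $g-\beta_r<0$. Since $\phi_r\ge 0$, the $r$-th set is empty and can be removed from the union. The boundary cases $g=\pm\infty$ fit this scheme. For $g=-\infty$ all components are inactive and the preimage is empty. For $g=+\infty$ we set $R_r=\infty$, so each $E_r$ is all of $\mathcal X$.

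\textbf{Step 3: invert $\phi_r$ on each active component.} The function $\phi_r:[0,\infty)\to[0,\infty)$ is continuous and strictly increasing, with $\phi_r(0)=0$, and it is unbounded. By the intermediate value theorem it is a bijection onto $[0,\infty)$. Its inverse $\phi_r^{-1}$ is therefore well defined on $[0,\infty)$ and strictly increasing. For $r\in\mathcal A(g)$ we have $g-\beta_r\ge 0$, so $R_r(g)=\phi_r^{-1}(g-\beta_r)\ge 0$ exists. Strict monotonicity gives
\begin{align}
\phi_r(d_r(x))\le g-\beta_r \iff d_r(x)\le R_r(g).
\end{align}
Both $d_r(x)$ and $R_r(g)$ are nonnegative, so this is further equivalent to
\begin{align}
(x-c_r)^\top A_r(x-c_r)\le R_r(g)^2.
\end{align}
This inequality is exactly the condition $x\in E_r(g)$. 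Because $A_r\succ 0$, the set $E_r(g)$ is a (possibly degenerate, radius-zero) ellipsoid centered at $c_r$, intersected with $\mathcal X$.

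\textbf{Step 4: conclude.} Substituting Step 3 into the union from Steps 1 and 2 yields the claimed identity. The count of at most $Q$ regions follows from $|\mathcal A(g)|\le Q$.

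I expect no step to be a real obstacle. The only points needing care are in Step 3: checking that $\phi_r^{-1}(g-\beta_r)$ is defined, which is where unboundedness and continuity are used, and handling $g-\beta_r=0$, where $E_r(g)=\{c_r\}\cap\mathcal X$.
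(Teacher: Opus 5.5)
Your proof is correct and follows the paper's argument essentially step for step. Both turn the minimum into a disjunction, drop inactive components because $\phi_r\ge 0$, use strict monotonicity to get $d_r(x)\le R_r(g)$, square to obtain the ellipsoid, and use the same conventions at $g=\pm\infty$; your use of continuity, unboundedness and the intermediate value theorem to show that $\phi_r^{-1}$ is defined on all of $[0,\infty)$ is a worthwhile clarification that the paper leaves implicit.
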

\begin{proof}
See Supplement~\ref{sec:proofs}.
\end{proof}
\end{tcolorbox}

Theorem~\ref{thm:exact_prefact} holds for any parameter values satisfying the conditions above, and is therefore independent of training accuracy or generalization. It shows is that the complexity of the forward radial transformations does not carry over to computing the preimage.  Each active radial transformation enters the preimage only through the scalar
\begin{align}
R_r(g)
=
\phi_r^{-1}(g-\beta_r) .
\end{align}
Hence, once the radius $R_r(g)$ is computed, $\phi_r$ no longer appears in the preimage representation. The learned preimage is therefore fully specified by the geometric parameters $\big(c_r,A_r,R_r(g)\big)$.

\begin{wraptable}{r}{0.5\textwidth}
\vspace{-0.3cm}
\setlength{\intextsep}{0pt}
\setlength{\columnsep}{1em}
\centering
\begin{adjustbox}{width=\linewidth}
\tiny
\begin{tabular}{lll}
\toprule
\textbf{Radial model $\phi_r$}
&
\textbf{Forward parameterization}
&
\textbf{Evaluation of $\phi_r^{-1}$}
\\
\midrule
(Broken) Power
&
(Piecewise) power law
&
(Piecewise) closed form
\\
Monotone spline
&
Rational-quadratic spline
&
Analytic within a spline bin
\\
Monotone neural
&
Strictly monotone neural network
&
One-dimensional root solve
\\
\bottomrule
\end{tabular}
\end{adjustbox}
\caption{\textbf{Radial instantiations.} Different choices of $\phi_r$ change the forward parameterization while preserving the same ellipsoidal preimage representation.}
\label{tab:instantiations}
\vspace{-0.3cm}
\end{wraptable}

\textbf{Computation of radii:} The only required step in constructing the preimage is therefore the computation of the radii $R_r(g)$ through simple scalar inversion of $\phi_r(g)$. \emph{The entire preimage is then available analytically.} 

For neural parameterizations of $\phi_r^{-1}(g)$, strict monotonicity allows $R_r(g)$ to be computed by simple one-dimensional root-finding problem with a unique solution. Once the radii have been computed for a given target level, the resulting preimage representation can be reused across arbitrarily many downstream queries without reevaluating the radial transformations; we formalize this as an \emph{amortization property} in Supplement~\ref{sec:amorized_factorization}. Importantly, more expressive choices of $\phi_r$ (see Table~\ref{tab:instantiations}) may change the forward model and the one-time radius computation, but not the form or spatial complexity of the resulting preimage representation.

$\Longrightarrow$~\emph{\method represents each learned preimage exactly as a finite union of ellipsoidal geometries  that can be reused for subsequent downstream optimization over the preimage. The radial transformations $\phi_r$ may be arbitrarily expressive, but, after a single scalar inversion, their complexity factors out, and subsequent preimage queries operate only on the resulting geometric regions.}


\subsection{Efficient optimization over learned preimages}\label{sec:optimization}

The preimage factorization of \method also has implications for the downstream optimization task
\begin{align}
\min \left\{J(x): \; x\in\mathcal{P}_{F_\theta}(g)\right\}.
\label{eq:downstream_problem}
\end{align}
For a generic neural predictor, the feasible set $F_\theta(x)\leq g$ is defined only implicitly by the learned model and is generally nonconvex. For \method, Theorem~\ref{thm:exact_prefact} replaces this implicit feasible set with the exact finite union
\begin{align}
\mathcal P_{F_\theta}(g)
=
\bigcup_{r\in\mathcal{A}(g)}
E_r(g),
\end{align}
and therefore
\begin{align}
\inf_{x\in\mathcal P_{F_\theta}(g)}J(x)
=
\min_{r\in\mathcal A(g)}
\inf_{x\in E_r(g)}J(x).
\label{eq:regionwise_optimization}
\end{align}
Thus, global optimization over the generally nonconvex learned preimage is solved by optimizing separately over regions $E_r(g)$, and selecting the best solution. 
This is useful when the optimization  problems over each region are tractable. For example, if the input space $\mathcal X$ is convex, every active ellipsoid $E_r(g)$ is convex as well since $A_r\succ0$. If, in addition, $J$ is convex, then optimization over each region in~\Eqref{eq:regionwise_optimization} is a convex optimization problem. Therefore, global optimization over the full nonconvex preimage reduces
to finitely many convex optimization tasks.

The above reduction is especially relevant for linear objectives. For a generic nonlinear predictor $F_\theta$ (e.g., an MLP), even a linear downstream objective $J$ can still lead to a highly difficult nonconvex inverse-constrained problem as the predictor imposes the feasibility constraint. Under \method, \emph{complexity of $F_\theta$ has already been factored out}, and, when $\mathcal X=\mathbb R^d$, linear optimization over each ellipsoidal region has a closed-form solution. Evaluating these solutions across all active regions yields the global optimum over the complete, potentially disconnected preimage.

\begin{tcolorbox}[
colback=White!8,
colframe=Green!75!black,
boxrule=0.8pt,
arc=1.5mm,
left=1.2mm,
right=1.2mm,
top=1mm,
bottom=1mm,
fonttitle=\bfseries,
coltitle=white,
enhanced,
breakable
]
\begin{theorem}[Closed-form linear optimization]
\label{thm:linear_optimization}
Let $\mathcal{X}=\mathbb{R}^d$, let $a\in\mathbb{R}^d$ with $a\neq0$, and fix a target level $g$ with $\mathcal{A}(g)\neq\varnothing$. Let $J(x)=a^\top x$ be a linear objective function. Then
\begin{align}
\inf_{x\in\mathcal{P}_{F_\theta}(g)}
J(x)
=
\min_{r\in\mathcal{A}(g)}
\left[
a^\top c_r
-
R_r(g)\sqrt{a^\top A_r^{-1}a}
\right].
\label{eq:linear_optimum}
\end{align}
If $r^\star$ attains the minimum above, then a globally optimal solution is
\begin{align}
x^\star
=
c_{r^\star}
-
\frac{R_{r^\star}(g)}
{\sqrt{a^\top A_{r^\star}^{-1}a}}
A_{r^\star}^{-1}a.
\label{eq:linear_optimizer}
\end{align}
Thus, global linear optimization over the generally nonconvex, disconnected preimage reduces to evaluating one scalar objective value for each active radial component.
\end{theorem}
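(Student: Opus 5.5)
The plan is to combine Theorem~\ref{thm:exact_prefact} with the decomposition in \Eqref{eq:regionwise_optimization}, which reduces the claim to minimizing a linear function over a single ellipsoid. Since $\mathcal{X}=\mathbb{R}^d$, Theorem~\ref{thm:exact_prefact} gives $\mathcal{P}_{F_\theta}(g)=\bigcup_{r\in\mathcal{A}(g)}E_r(g)$ with $E_r(g)=\{x:(x-c_r)^\top A_r(x-c_r)\le R_r(g)^2\}$. The union is finite and nonempty because $\mathcal{A}(g)\neq\varnothing$. Each radius is well defined: $g-\beta_r\ge 0$, and $\phi_r$ is a continuous, strictly increasing, unbounded bijection $[0,\infty)\to[0,\infty)$, so $R_r(g)\ge0$. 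The infimum over a finite union equals the minimum of the per-region infima, so it suffices to compute $\inf_{x\in E_r(g)}a^\top x$ for each $r$ and show that it is attained.

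For a fixed $r$, I would change variables to map the ellipsoid onto a ball. Write $A_r=A_r^{1/2}A_r^{1/2}$ with $A_r^{1/2}\succ0$, and set $x=c_r+A_r^{-1/2}u$. Then $E_r(g)$ corresponds bijectively to $\{u:\|u\|_2\le R_r(g)\}$, and the objective becomes $a^\top x=a^\top c_r+(A_r^{-1/2}a)^\top u$.

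By Cauchy--Schwarz, $(A_r^{-1/2}a)^\top u\ge -\|A_r^{-1/2}a\|_2\,\|u\|_2\ge -R_r(g)\sqrt{a^\top A_r^{-1}a}$. Equality holds at $u^\star=-R_r(g)\,A_r^{-1/2}a/\|A_r^{-1/2}a\|_2$, which is well defined because $a\neq0$ and $A_r^{-1}\succ0$ give $a^\top A_r^{-1}a>0$. Mapping back gives $x_r^\star=c_r-R_r(g)A_r^{-1}a/\sqrt{a^\top A_r^{-1}a}$, with value $a^\top c_r-R_r(g)\sqrt{a^\top A_r^{-1}a}$. So the infimum over $E_r(g)$ is attained, and it equals the bracketed term in \Eqref{eq:linear_optimum}. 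The degenerate case $R_r(g)=0$, where $E_r(g)=\{c_r\}$, is covered by the same formula.

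Finally, take the minimum over $r\in\mathcal{A}(g)$. If $r^\star$ attains the minimum, then $x^\star=x^\star_{r^\star}$ lies in $E_{r^\star}(g)\subseteq\mathcal{P}_{F_\theta}(g)$ and achieves the global infimum, so it is a global minimizer and the infimum is a minimum. This gives \Eqref{eq:linear_optimizer}. None of the steps is hard. The points needing care are the well-definedness of $R_r(g)$ (from the properties of $\phi_r$) and the nondegeneracy $a^\top A_r^{-1}a>0$ that justifies dividing in the optimizer. The rest is the standard support-function computation for an ellipsoid.
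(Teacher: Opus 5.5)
Your proposal is correct and follows essentially the same route as the paper's proof. Both reduce via Theorem~\ref{thm:exact_prefact} to a finite union of ellipsoids, map each ellipsoid to a ball using $A_r^{1/2}$, apply Cauchy--Schwarz with an explicit attaining point, and select the best active component. Your checks that $R_r(g)$ is well defined and that the case $R_r(g)=0$ is covered are details the paper leaves implicit.
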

\begin{proof}
See Supplement~\ref{sec:proofs}.
\end{proof}
\end{tcolorbox}

Once the output-level-dependent radii are available, Theorem~\ref{thm:linear_optimization} removes the inverse-constrained optimization entirely. In other words, no iterative nonlinear solver or mixed-integer representation of the predictor is required. Instead, global optimality follows from evaluating the analytic solution associated with each active region, and then selecting the best one.


\subsection{Universal approximation theorem}
\label{sec:universal_approximation}

A natural question is whether the exact preimage structure of \method limits its approximation power. The following result shows that it does not: a single finite \method predictor can approximate any continuous forward function and the corresponding preimages arbitrarily well.

\begin{tcolorbox}[
colback=White!8,
colframe=Green!75!black,
boxrule=0.8pt,
arc=1.5mm,
left=1.2mm,
right=1.2mm,
top=1mm,
bottom=1mm,
fonttitle=\bfseries,
coltitle=white,
enhanced,
breakable
]
\begin{theorem}[Universal approximation for forward predictor and preimage representation]
\label{thm:universal}
Let $\mathcal{X}\subset\mathbb{R}^d$ be compact and let
$f\in C(\mathcal{X})$. Then, for every $\varepsilon>0$, there exists a
finite \method predictor $F_\theta$ such that
\begin{align}
\sup_{x\in\mathcal{X}}
\lvert F_\theta(x)-f(x)\rvert
<
\varepsilon.
\end{align}
and, simultaneously for every target level $g\in\mathbb{R}$,
\begin{align}
\mathcal{P}_f(g)
\subseteq
\mathcal P_{F_{\theta}}(g)
\subseteq
\mathcal{P}_f(g+\varepsilon)
.
\label{eq:universal_filtration}
\end{align}
The result already holds for the restricted subclass
$F_\theta(x)
=
\min_{r=1,\ldots,Q}
\left[
\beta_r+\lambda\|x-c_r\|_2^2
\right]$,
where $\lambda>0$ is a common quadratic coefficient shared by all experts.
\end{theorem}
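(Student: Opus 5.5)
The plan is to translate the two set inclusions in \eqref{eq:universal_filtration} into pointwise inequalities and then build the restricted predictor $F_\theta(x)=\min_r[\beta_r+\lambda\|x-c_r\|_2^2]$ directly. The inclusion $\mathcal P_f(g)\subseteq\mathcal P_{F_\theta}(g)$ for all $g$ is equivalent to $F_\theta\le f$ on $\mathcal X$ (take $g=f(x)$). The inclusion $\mathcal P_{F_\theta}(g)\subseteq\mathcal P_f(g+\varepsilon)$ for all $g$ is equivalent to $f\le F_\theta+\varepsilon$ (take $g=F_\theta(x)$). Hence it suffices to construct $F_\theta$ with
\begin{align}
f(x)-\tfrac{2\varepsilon}{3}\le F_\theta(x)\le f(x)\quad\text{for all }x\in\mathcal X,
\end{align}
which also gives the strict uniform bound $\sup|F_\theta-f|<\varepsilon$. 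Since this is the restricted subclass, the full claim follows, because $\phi_r(t)=\lambda t^2$ with $A_r=I$ is an admissible radial model in \Eqref{eq:prefact}. I assume $\mathcal X\neq\varnothing$ and put $M\coloneqq\|f\|_\infty$.

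Fix the parameters in this order. By uniform continuity of $f$ on compact $\mathcal X$, choose $\eta>0$ with $|f(x)-f(y)|<\varepsilon/3$ whenever $\|x-y\|\le 2\eta$. Next choose $\lambda$ with $\tfrac34\lambda\eta^2>2M+\varepsilon$. Then choose $\delta\le\min\{\eta/2,\sqrt{\varepsilon/(3\lambda)}\}$. By compactness, pick a finite $\delta$-net $c_1,\dots,c_Q\in\mathcal X$ of $\mathcal X$, and set
\begin{align}
\beta_r\coloneqq\min\{f(y):y\in\mathcal X,\ \|y-c_r\|\le\delta\}-\lambda\delta^2 .
\end{align}
The minimum exists because the ball is compact and contains $c_r$.

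For the upper bound $F_\theta\le f$, take any $x$ and a center with $\|x-c_r\|\le\delta$. Then $x$ lies in the ball defining $\beta_r$, so $\beta_r+\lambda\|x-c_r\|^2\le f(x)-\lambda\delta^2+\lambda\delta^2=f(x)$.

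For the lower bound, fix $x$ and bound every component from below. If $\|x-c_r\|\le\eta$, every $y$ in the ball around $c_r$ satisfies $\|y-x\|\le\eta+\delta\le2\eta$. Hence $\beta_r\ge f(x)-\varepsilon/3-\lambda\delta^2\ge f(x)-2\varepsilon/3$. If instead $\|x-c_r\|>\eta$, then
\begin{align}
\beta_r+\lambda\|x-c_r\|^2> -M-\lambda\delta^2+\lambda\eta^2\ge -M+\tfrac34\lambda\eta^2> M+\varepsilon\ge f(x).
\end{align}
Taking the minimum over $r$ gives $F_\theta(x)\ge f(x)-2\varepsilon/3$.

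The only delicate step is this last one: the order of quantifiers in the parameter choice. The scale $\eta$ must come from uniform continuity first. The slope $\lambda$ must then be large enough that far-away components can never undercut $f$. Only after that can the net resolution $\delta$ be made small relative to $\lambda$, so that the offset $\lambda\delta^2$, which enforces $F_\theta\le f$, costs at most $\varepsilon/3$. With this order fixed, the remaining verifications are routine.
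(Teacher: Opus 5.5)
Your proof is correct and uses essentially the paper's construction (Lemma~\ref{lem:quadratic_universal}): isotropic quadratics with a common coefficient centred on a finite $\delta$-net, $\lambda$ large relative to the range of $f$ so that distant components cannot undercut $f$, $\delta$ small so that $\lambda\delta^2$ is negligible, and the same near/far case split for the lower bound. The only difference is how you get the one-sided bound that the filtration requires: the paper takes $\beta_r=f(c_r)$ to obtain a two-sided $\varepsilon/2$-approximation and then shifts all offsets down by $\varepsilon/2$ to reach $f-\varepsilon<F_\theta<f$, whereas you build $F_\theta\le f$ in directly through $\beta_r=\min\{f(y):y\in\mathcal X,\ \|y-c_r\|_2\le\delta\}-\lambda\delta^2$; this is equally valid, while the paper's shift has the mild advantage of applying to any uniform approximant.
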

\begin{proof}
See Supplement~\ref{sec:proofs}.
\end{proof}
\end{tcolorbox}

Theorems~\ref{thm:exact_prefact} and~\ref{thm:universal} establish two complementary guarantees. Theorem~\ref{thm:exact_prefact} shows that every \method preimage is represented \emph{exactly} by a finite union of ellipsoidal regions. Theorem~\ref{thm:universal} shows that imposing this exact structure does \emph{not} restrict what the model can ultimately learn. Instead, a single finite predictor can approximate any continuous forward map arbitrarily well, and, at the same time, its preimage representation learns the entire family of ground-truth preimages to arbitrarily fine resolution. Thus, the learned preimages are both \emph{exact for the learned predictor $F_\theta$} (Theorem~\ref{thm:exact_prefact}) and \emph{arbitrarily faithful to the ground-truth preimages} of the underlying function~(Theorem~\ref{thm:universal}).

Surprisingly, the above universal approximation theorem requires neither expressive radial transformations nor learned anisotropic metrics; it already holds for the restricted subclass with $A_r=I$ and $\phi_r(d)=\lambda d^2$. Richer choices of $A_r$ and $\phi_r$ therefore provide additional flexibility for finite $Q$, while preserving the same exact preimage factorization. In particular, the target level $g$ need not be specified during training. Here, Theorem~\ref{thm:universal} shows that the same learned predictor supports preimage queries over a continuum of target levels \emph{without} changing the model or the form of its preimage representation.

\textbf{Approximation rate:} For Lipschitz targets, we further show that \method achieves a constructive uniform approximation rate of $\mathcal O(Q^{-1/d})$ in the number of radial components $Q$; see Supplement~\ref{sec:approximation_rate}.


\section{Experiments}
\label{sec:experiments}

\begin{wrapfigure}{r}{0.5\textwidth}
    \centering
\vspace{-1.5cm}
    \begin{subfigure}[t]{0.98\linewidth}
        \centering
        \includegraphics[width=\linewidth]{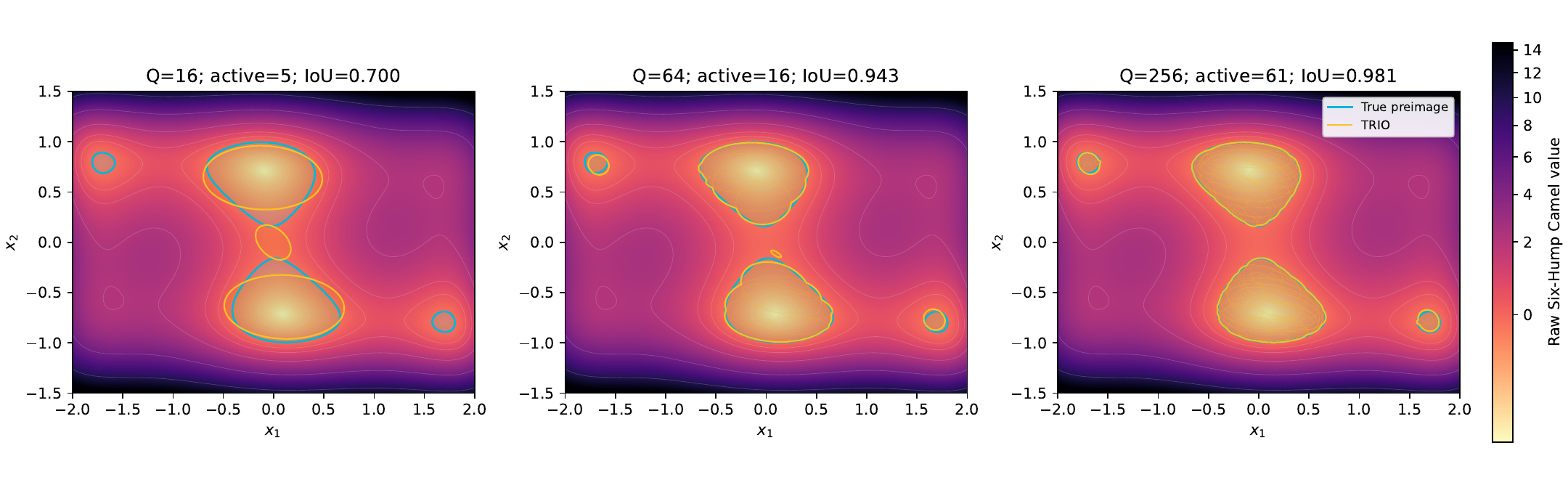}
    \vspace{-0.6cm}
        \caption{Target level $g=-0.1$.}
        \label{fig:camel_qual_top}
    \end{subfigure}

    \vspace{0cm}

    \begin{subfigure}[t]{0.98\linewidth}
        \centering
        \includegraphics[width=\linewidth]{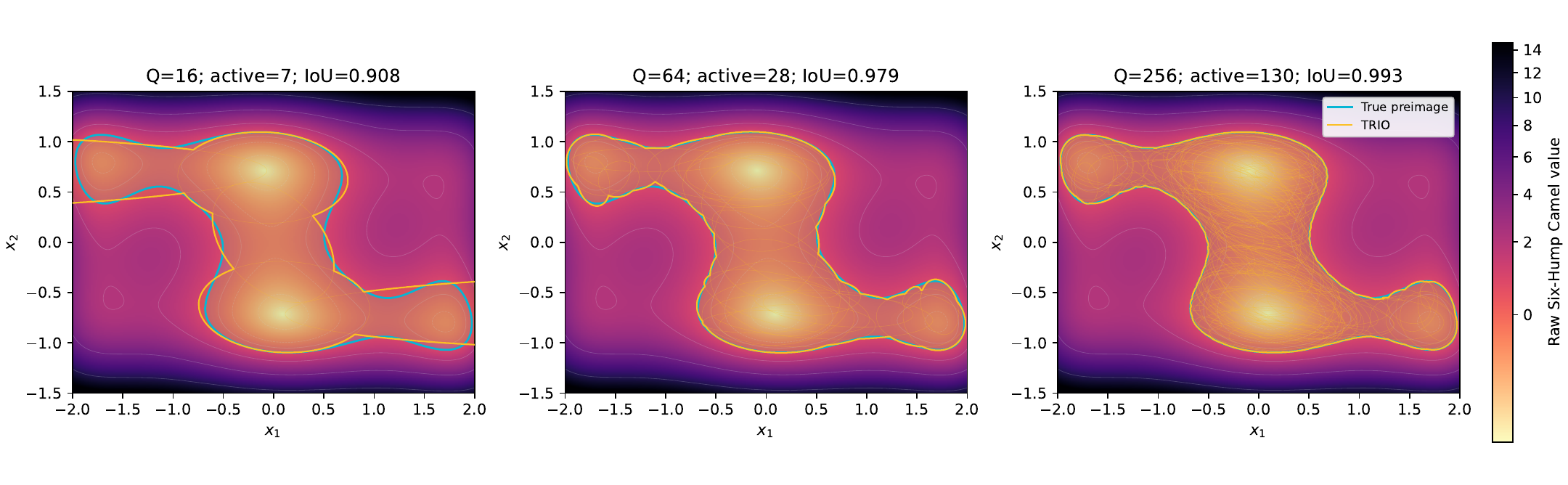}
    \vspace{-0.6cm}
        \caption{Target level $g=0.9$.}
        \label{fig:camel_qual_bottom}
    \end{subfigure}
    \vspace{-0.2cm}
    \caption{
    \textbf{Preimage reconstruction.}
    For fixed neural backbone, larger \(Q\) yields increasingly accurate geometric reconstruction, while the number of active experts varies across target levels.
    }
    \label{fig:camel_qualitative}
    \vspace{-0.3cm}
\end{wrapfigure}

The aim of our experiments is to validate the theoretical properties of \method empirically. Specifically, we analyze whether \emph{(i)} \method retains sufficient expressiveness to achieve accurate forward prediction; \emph{(ii)} \method learns accurate preimages while preserving their exact and explicit representation; and \emph{(iii)} \method allows for efficient downstream optimization over the preimage. For a fair comparison, all neural baselines have the same number of parameters.

We compare \method against baselines corresponding to the above aims. \textit{(i)}~For forward prediction, we use standard unrestricted MLPs, together with an input convex neural network (ICNN)~\citep{amos2017icnn} as a baseline with tractable convex sublevel sets. \textit{(ii)}~For preimage recovery, we compare against PREMAP2~\citep{bjorklund2025premap2,zhang2025premap} as a state-of-the-art post-hoc reconstruction method. \textit{(iii)}~For downstream optimization, we compare against standard approaches that optimize directly through the learned predictor: IPOPT~\citep{wachter2006implementation} and SCIP~\citep{bestuzheva2025scip} for smooth neural predictors (smooth MLPs), Gurobi~\citep{gurobi2026} for continuous piecewise-linear networks (CPWL MLPs), and CLARABEL~\citep{goulart2026clarabel} for ICNNs. All results are reported over ten seeds. Details about the datasets, training, and solver are in \textbf{Supplement~\ref{sec:experiment_supplement}}.

\begin{wrapfigure}{r}{0.33\textwidth}
\vspace{-0.6cm}
\centering
\includegraphics[width=0.33\textwidth, trim=0.25cm 0.cm 0.25cm 0.cm, clip]{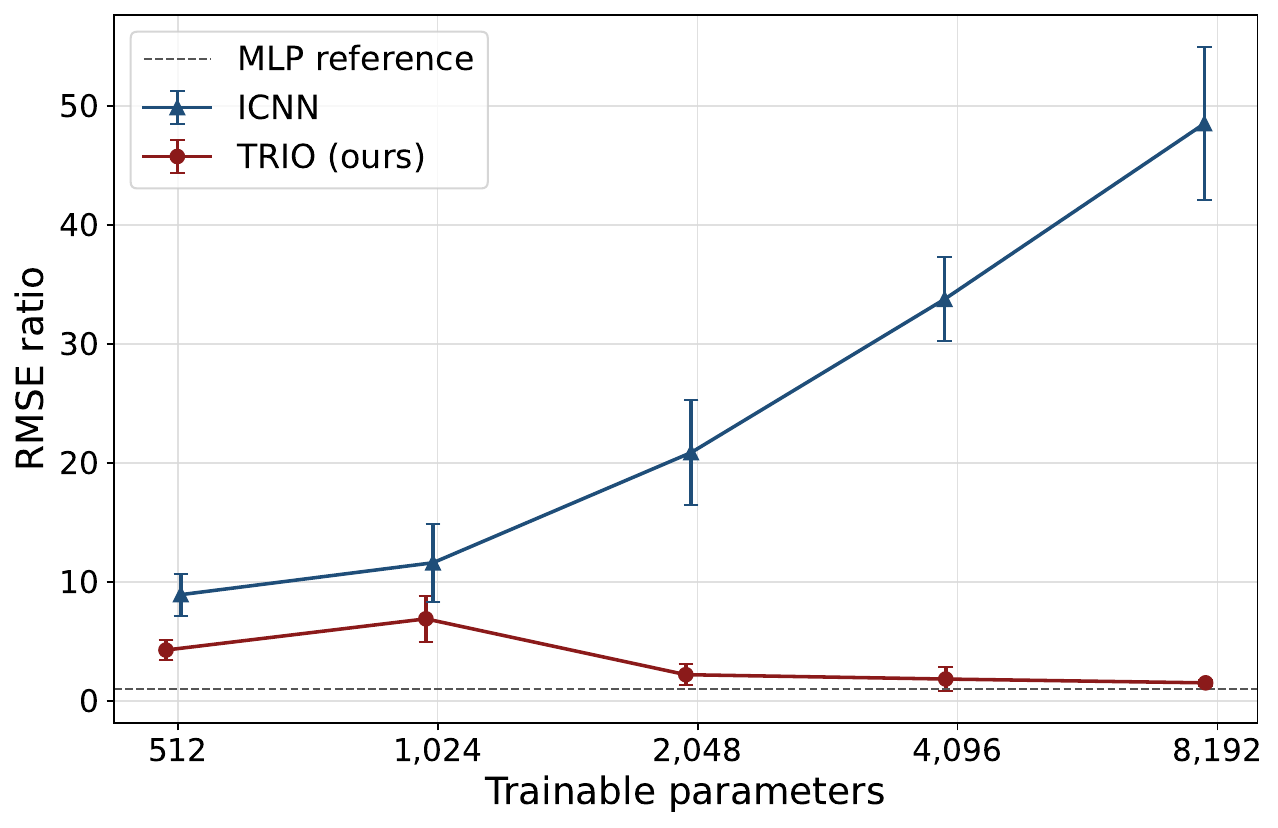}
\vspace{-0.4cm}
\caption{\textbf{Forward RMSE:} For increasing parameters, \method converges quickly to the forward RMSE of the MLP, while the ICNN completely deteriorates.}
\label{fig:forward_ratio}
\vspace{-0.8cm}
\end{wrapfigure}

\textbf{\underline{Six-Hump Camel dataset.}}~The Six-Hump Camel data provides a two-dimensional benchmark with strongly nonconvex and disconnected sublevel sets (details in Supplement~\ref{app:dgp_data}). 


$\bullet$~\emph{What do the learned preimages look like?}
We use a neural radial backbone and vary only the number of radial components $Q$, thereby allowing us to isolate the effect of geometric capacity. Figure~\ref{fig:camel_qualitative} shows how the learned preimage changes as $Q$ increases from $16$ to $256$ for a fixed radial backbone. Here, we see that learned preimage progressively approximates the true boundary and can recover finer nonconvex structures. This is consistent with the intended behavior of \method: increasing $Q$ provides a finer preimage representation, while different target levels $g$ activate different components of the same trained predictor.

$\bullet$~\emph{How does forward expressiveness improve as the model grows?} In Figure~\ref{fig:forward_ratio}, we increase $Q$ and show how forward RMSE of \method compares against that of an unrestricted MLP using the same number of parameters. For this, we report the ratio of the respective RMSEs. As the parameter count increases, the RMSE of \method approaches quickly approaches the MLP, while a same-sized ICNN leads to a large performance gap. Thus, increasing $Q$ improves not only the preimage representation but also the forward predictive performance of \method, without imposing global convexity. This behavior is consistent with our universal approximation theorem.

\begin{wraptable}{r}{0.6\textwidth}
\vspace{-0.3cm}
\setlength{\intextsep}{0pt}
\setlength{\columnsep}{1em}
\centering
\begin{adjustbox}{width=\linewidth}
\tiny
    \begin{tabular}{llrrrr}
        \toprule
        Method
        & Budget
        & Certified coverage $\uparrow$
        & Truth IoU $\uparrow$
        & Time (s) $\downarrow$
        & Rel. time increase
        \\
        \midrule

        \multirow{6}{*}{PREMAP2}
        & \cellcolor{gray!15}$64$
        & \cellcolor{red!15}$0.42\pm0.03$
        & \cellcolor{red!15}$0.42\pm0.03$
        & \cellcolor{red!15}$7.42\pm0.24$
        & \cellcolor{red!15} ${1\,178\times}$
        \\

        & \cellcolor{gray!15}$128$
        & \cellcolor{red!15}$0.63\pm0.02$
        & \cellcolor{red!15}$0.63\pm0.02$
        & \cellcolor{red!15}$14.51\pm0.99$
        & \cellcolor{red!15} ${2\,303\times}$
        \\

        & \cellcolor{gray!15}$256$
        & \cellcolor{red!15}$0.74\pm0.02$
        & \cellcolor{red!15}$0.74\pm0.02$
        & \cellcolor{red!15}$28.12\pm1.90$
        & \cellcolor{red!15} ${4\,463\times}$
        \\

        & \cellcolor{gray!15}$512$
        & \cellcolor{red!15}$0.78\pm0.03$
        & \cellcolor{red!15}$0.77\pm0.03$
        & \cellcolor{red!15}$53.83\pm3.54$
        & \cellcolor{red!15} ${8\,545\times}$
        \\

        & \cellcolor{gray!15}$1\,024$
        & \cellcolor{red!15}$0.80\pm0.03$
        & \cellcolor{red!15}$0.79\pm0.03$
        & \cellcolor{red!15} $116.86\pm29.96$
        & \cellcolor{red!15} ${18\,549\times}$
        \\

        & \cellcolor{gray!15}$2\,048$
        & \cellcolor{red!15}$0.80\pm0.03$
        & \cellcolor{red!15}$0.80\pm0.03$
        & \cellcolor{red!15}$228.29\pm35.57$
        & \cellcolor{red!15} ${36\,237\times}$
        \\

        \midrule

        \textbf{\method}~(ours)
        & \cellcolor{green!15}{exact}
        & \cellcolor{green!15}${\bm{1.00\pm0.00}}$
        & \cellcolor{green!15}${\bm{0.96\pm0.02}}$
        & \cellcolor{green!15}${\bm{0.01\pm0.00}}$
        & \cellcolor{green!15}${1\times}$
        \\

        \bottomrule
    \end{tabular}
    \end{adjustbox}

    \vspace{-0.2cm}
    \caption{
        \textbf{Exact preimage vs. post-hoc recovery.}
        \method represents the learned preimage by construction, achieves a higher ground-truth IoU, and is up to $10^5$ times faster. 
    }
    \label{tab:premap_comparison}
    \vspace{-0.3cm}
\end{wraptable}

$\bullet$~\emph{Exact versus post-hoc preimage recovery.}
We now assess  the advantage of an exact preimage representation over post-hoc recovery. To do so, we compare the explicit preimage from \method against post-hoc recovery using PREMAP2 for an unrestricted MLP. Table~\ref{tab:premap_comparison} shows the benefits. \method provides the exact preimage by construction, thus achieves almost perfect coverage (i.e., itersection over union [IoU] of $0.96$ compared to the true preimage) and requires only $0.01$\,s. For larger refinement budget, PREMAP2 reaches only $0.80$ IoU, and, at $B=2048$, the post-hoc recovery requires even $228.29$\,s, (=$3.6\times10^4$ times the cost of \method). This shows the benefit of \method: the exact preimages \emph{by construction} lead to almost perfect coverage of the learned preimage and computational speedups of several orders of magnitude.

\textbf{\underline{Increasing nonlinearity}.} We next analyze whether the properties of \method hold when the underlying data becomes increasingly nonlinear. Here, we use a dataset based on complex powers and thus highly non-injective maps indexed by $p$, where larger $p$ induce increasingly oscillatory forward dynamics and increasingly multimodal, disconnected preimages. We instantiate \method with the Broken-Power radial backbone and compare against a smooth MLPs, CPWL MLPs, and ICNNs. Experimental details are provided in the Supplement~\ref{app:baselines}.

$\bullet$~\emph{Can \method maintain (i)~forward accuracy and (ii)~accurate preimage approximation as nonlinearity increases?}
We measure (i)~forward accuracy by test RMSE and (ii)~backward accuracy by preimage IoU across target levels. The neural baselines do \textbf{\underline{not}} provide an explicit preimage natively; we approximate their preimage using a grid-based forward evaluation and output thresholding.

\begin{table*}[h!]
    \vspace{-0.1cm}
    \centering
    \begin{adjustbox}{max width=\textwidth}
    \begin{tabular}{lcccccccccc}
        \toprule
        \multicolumn{1}{l}{Complex power}
        & \multicolumn{2}{c}{$p=4$}
        & \multicolumn{2}{c}{$p=6$}
        & \multicolumn{2}{c}{$p=8$}
        & \multicolumn{2}{c}{$p=10$}
        & \multicolumn{2}{c}{$p=12$}
        \\
        \cmidrule(lr){2-3}
        \cmidrule(lr){4-5}
        \cmidrule(lr){6-7}
        \cmidrule(lr){8-9}
        \cmidrule(lr){10-11}

        Model
        & RMSE $[\downarrow]$ & IoU $[\uparrow]$
        & RMSE $[\downarrow]$ & IoU $[\uparrow]$
        & RMSE $[\downarrow]$ & IoU $[\uparrow]$
        & RMSE $[\downarrow]$ & IoU $[\uparrow]$
        & RMSE $[\downarrow]$ & IoU $[\uparrow]$
        \\
        \midrule

        Smooth MLP
        & \cellcolor{green!15}{$\bm{0.17\pm0.05}$} &  \cellcolor{green!15}{$\bm{0.99\pm0.00}^\dagger$}
        & \cellcolor{red!15}$0.56\pm0.18$ & \cellcolor{red!15}$0.95\pm0.01^\dagger$
        & \cellcolor{red!15}$0.83\pm0.21$ & \cellcolor{red!15}$0.90\pm0.02^\dagger$
        & \cellcolor{red!15}$3.15\pm6.23$ & \cellcolor{red!15}$0.79\pm0.15^\dagger$
        & \cellcolor{red!15}$3.18\pm5.94$ & \cellcolor{red!15}$0.75\pm0.12^\dagger$
        \\

        CPWL MLP
        & \cellcolor{red!15}$0.68\pm0.09$ & \cellcolor{red!15}$0.98\pm0.00^\dagger$
        & \cellcolor{red!15}$1.37\pm0.18$ & \cellcolor{red!15}$0.92\pm0.01^\dagger$
        & \cellcolor{red!15}$2.26\pm0.30$ & \cellcolor{red!15}$0.85\pm0.01^\dagger$
        & \cellcolor{red!15}$3.89\pm1.54$ & \cellcolor{red!15}$0.76\pm0.02^\dagger$
        & \cellcolor{red!15}$4.61\pm0.60$ & \cellcolor{red!15}$0.68\pm0.01^\dagger$
        \\

        ICNN
        & \cellcolor{red!15}$31.49\pm0.24$ & \cellcolor{red!15}$0.38\pm0.02^\dagger$
        & \cellcolor{red!15}$26.67\pm0.34$ & \cellcolor{red!15}$0.39\pm0.02^\dagger$
        & \cellcolor{red!15}$23.59\pm0.27$ & \cellcolor{red!15}$0.40\pm0.03^\dagger$
        & \cellcolor{red!15}$21.27\pm0.20$ & \cellcolor{red!15}$0.40\pm0.03^\dagger$
        & \cellcolor{red!15}$19.68\pm0.36$ & \cellcolor{red!15}$0.37\pm0.05^\dagger$
        \\

        \midrule

        \textbf{\method}~(ours)
        & \cellcolor{red!15} $0.19\pm0.01$ & \cellcolor{green!15}{$\bm{0.99\pm0.00}$}
        & \cellcolor{green!15}{$\bm{0.31\pm0.02}$} & \cellcolor{green!15}{$\bm{0.97\pm0.00}$}
        & \cellcolor{green!15}{$\bm{0.51\pm0.03}$} & \cellcolor{green!15}{$\bm{0.94\pm0.00}$}
        & \cellcolor{green!15}{$\bm{0.78\pm0.10}$} & \cellcolor{green!15}{$\bm{0.88\pm0.01}$}
        & \cellcolor{green!15}{$\bm{1.01\pm0.12}$} & \cellcolor{green!15}{$\bm{0.84\pm0.01}$}
        \\

        \bottomrule
        \multicolumn{11}{l}{\cellcolor{yellow!15}$\dagger$: {The preimages are \underline{\textbf{not}} native to the baseline models, but are instead approximated by computationally expensive dense grid evaluation and output thresholding.}}
    \end{tabular}
    \end{adjustbox}
    \vspace{-0.2cm}
    \caption{
        \textbf{Forward accuracy and preimage approximation:}
        Forward RMSE ($\times 10^{-2}$; $\downarrow$ lower is better) and preimage IoU
        ($\uparrow$ higher is better), reported as mean $\pm$ std. dev. over ten seeds.
    }
    \vspace{-0.2cm}
    \label{tab:complex_forward}
\end{table*}

Table~\ref{tab:complex_forward} reports the results: (i)~\method achieves the best forward accuracy from $p=6$ onward. (ii)~In terms of preimage approximation, \method achieves the highest IoU across all values of $p$, with a more pronounced advantage for larger $p$. The MLPs and ICNN are not a benchmark in itself but a performance comparison for predictors without the ability of generating preimages. Overall, this confirms the desired behavior by \method: it maintains both accurate forward prediction and accurate preimage approximation even for highly nonlinear settings.

$\bullet$~\emph{Does the explicit preimage representation translate into more efficient downstream optimization?} Here, we solve, for each learned predictor and target level $g$, the projection problem
$\min_x \frac{1}{2}\|x-x_0\|_2^2
\text{ s.t. }
F_\theta(x)\le g$,
which is the smallest perturbation required to enter the learned preimage. For the neural baselines, the constraint remains implicit in the predictor and is handled using global smooth nonlinear optimziation (SCIP for smooth MLP), global mixed integer optimization (Gurobi for CPWL MLP), local smooth nonlinear optimization (IPOPT for smooth MLP), or convex optimization (CLARABEL for ICNN). In contrast, \method \emph{optimizes directly over the explicit preimage without additional solvers} (see Supplement~\ref{app:inverse_optimization}).

\begin{figure}[h]
    \centering

    \begin{subfigure}[t]{0.32\linewidth}
        \centering
        \includegraphics[width=\linewidth]{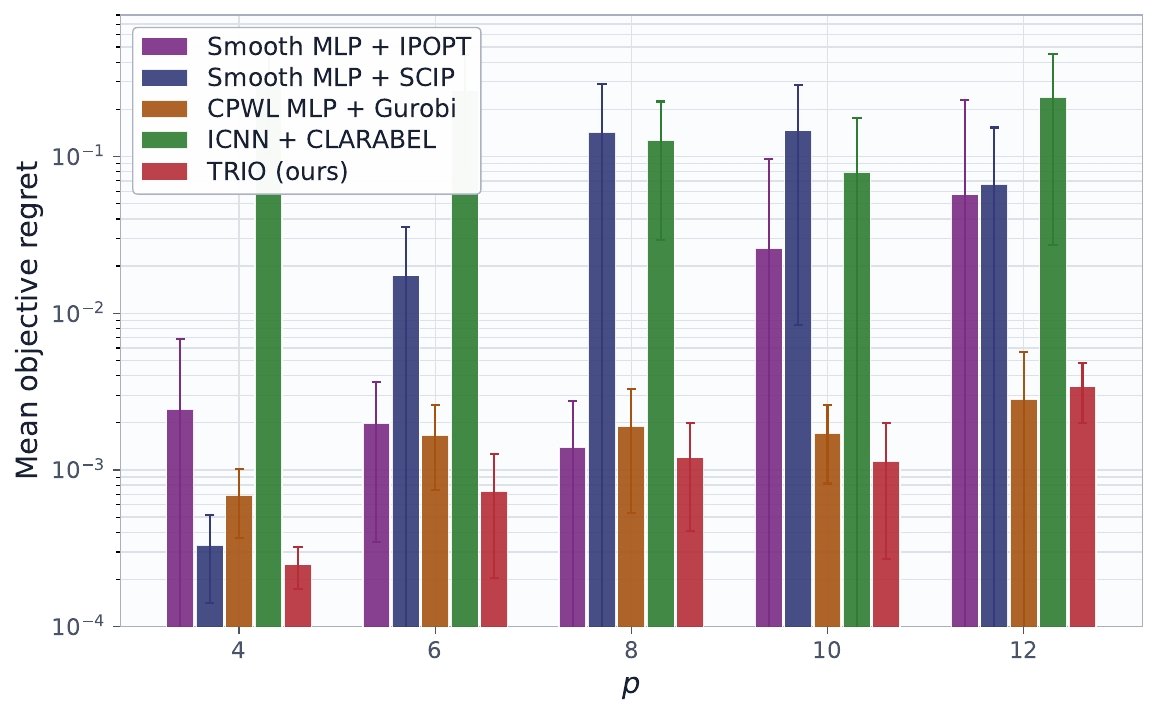}
        \vspace{-0.5cm}
        \caption{$[\downarrow]$ Objective regret}
        \label{fig:sub_b}
    \end{subfigure}
    \hfill
    \begin{subfigure}[t]{0.32\linewidth}
        \centering
        \includegraphics[width=\linewidth]{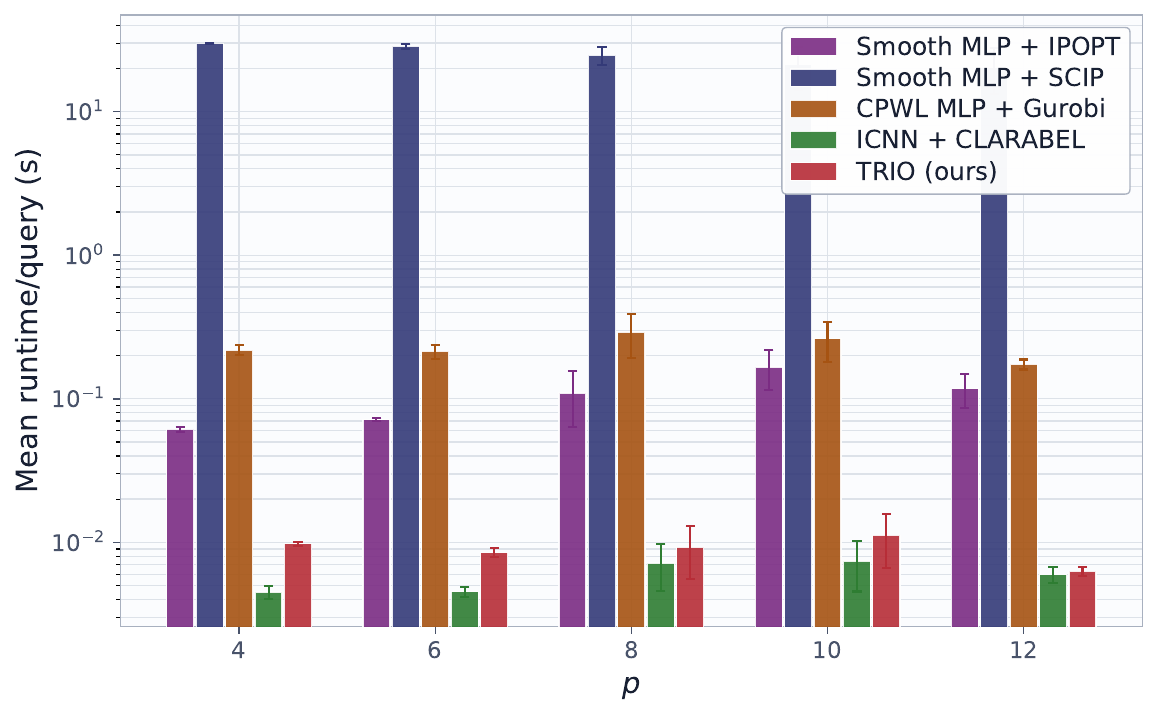}
        \vspace{-0.5cm}
        \caption{$[\downarrow]$ Runtime/query (s)}
        \label{fig:sub_c}
    \end{subfigure}
    \hfill
    \begin{subfigure}[t]{0.32\linewidth}
        \centering
        \includegraphics[width=\linewidth]{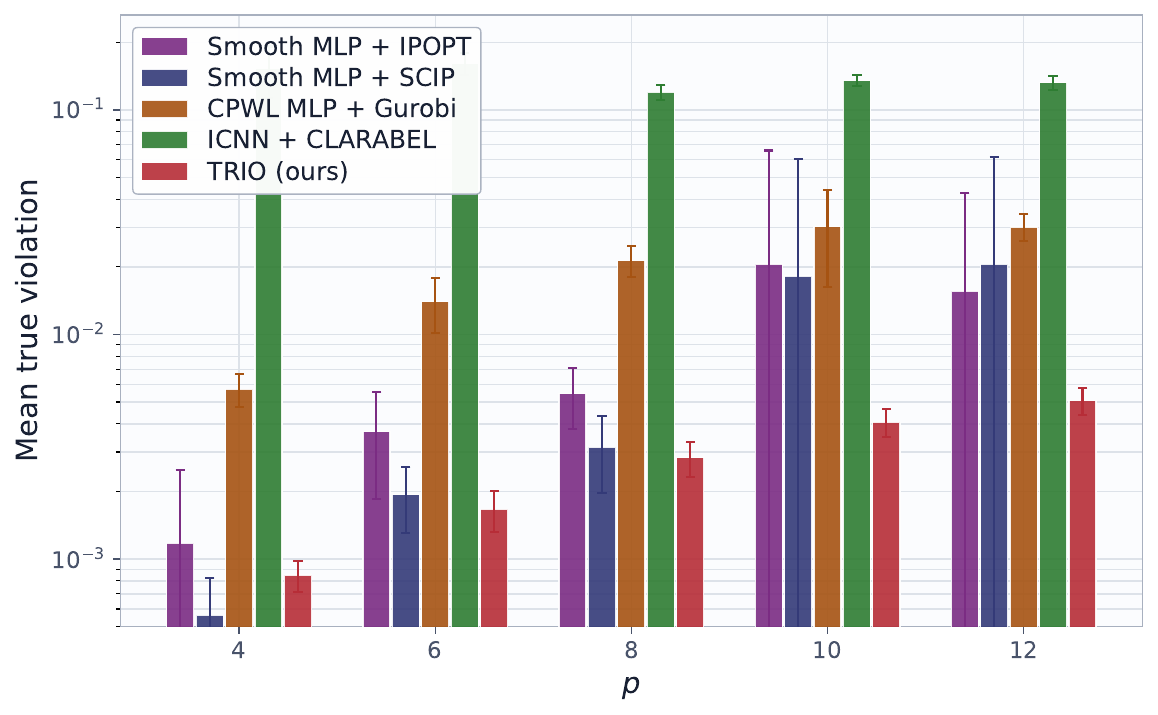}
        \vspace{-0.5cm}
        \caption{$[\downarrow]$ Constraint violation}
        \label{fig:sub_a}
    \end{subfigure}

    \vspace{-0.15cm}
    \caption{
    \textbf{Downstream optimization.}
    \textbf{(a)} Regret wrt. objective ($\downarrow$ lower is better);
    \textbf{(b)} Runtime per projection query ($\downarrow$ lower is better); and
    \textbf{(c)} Constraint violation ($\downarrow$ lower is better).
    }
    \label{fig:main_1x3}
    \vspace{-0.5cm}
\end{figure}

Figure~\ref{fig:main_1x3} shows the results: (i)~\method achieves the lowest regret wrt. objective; (ii)~it achieves substantially lower per-query runtimes than optimization through the unrestricted neural predictors; 
and (iii)~it achieves the lowest constraint violation. Overall, these results show that the explicit preimages from \method translates into accurate and computationally efficient downstream optimization.

\begin{wraptable}{r}{0.5\textwidth}
\vspace{-0.3cm}
\setlength{\intextsep}{0pt}
\setlength{\columnsep}{1em}
\centering
\begin{adjustbox}{width=\linewidth}
\tiny
    \begin{tabular}{lrrrr}
        \toprule
        Radial backbone
        & Parameters
        & One-time inversion (ms)
        & Time/query (ms)
        & Time/ellipse ($\mu$s)
        \\
        \midrule

        Broken-Power
        & $1\,152$
        & $0.04\pm0.02$
        & \cellcolor{green!15}$13.68\pm5.40$
        & \cellcolor{green!15}$106.84\pm42.21$
        \\

        Spline
        & $3\,200$
        & $6.32\pm2.52$
        & \cellcolor{green!15}$13.92\pm5.84$
        & \cellcolor{green!15}$108.78\pm45.63$
        \\

        Neural-small
        & $5\,120$
        & $13.08\pm5.08$
        & \cellcolor{green!15}$13.82\pm5.65$
        & \cellcolor{green!15}$107.93\pm44.11$
        \\

        Neural-mid
        & $50\,048$
        & $21.49\pm8.95$
        & \cellcolor{green!15}$13.90\pm5.74$
        & \cellcolor{green!15}$108.62\pm44.82$
        \\

        Neural-large
        & $500\,096$
        & $72.16\pm30.68$
        & \cellcolor{green!15}$14.03\pm5.94$
        & \cellcolor{green!15}$109.59\pm46.42$
        \\

        \bottomrule
    \end{tabular}
\end{adjustbox}
\vspace{-0.2cm}
\caption{\textbf{Sensitivity to forward complexity.}
    One-time inversion and downstream optimization time for increasingly complex radial backbones with fixed $Q=128$. $\Rightarrow$ The backbone complexity only affects scalar inversion cost, while optimization time remains unchanged.
}
\vspace{-0.2cm}
\label{tab:backbone_ablation}
\end{wraptable}

$\bullet$~\emph{Sensitivity to forward-model complexity.}
Finally, we examine whether downstream optimization cost depends on the complexity of the forward radial model. Holding $Q=128$ fixed, we increase radial backbone size from $1.2$K to $500$K parameters (Table~\ref{tab:backbone_ablation}). More expressive backbones increase only slightly the one-time scalar inversion cost, while subsequent optimization remains constant at $14$\,ms/query, with less than $3\%$ variation between the smallest to largest model. Hence, once the preimage has been constructed, downstream optimization cost is independent  backbone complexity.

\textbf{\underline{Real-world application (AC power-flow):}} Finally, we test whether the same advantage of \method carry over to real-world application. For this, we use the IEEE 30-bus system from \texttt{pandapower} \citep{thurner2018pandapower} and maximize total renewable power injection across five buses, subject to the learned surrogate of the grid-security constraint. Each returned solution is then checked against the true AC power-flow. This yields a linear objective problem in which both efficient optimization over the learned model and feasibility under the real-world dynamics are important (details in Supplement~\ref{app:dgp_data}).

\begin{wraptable}{r}{0.6\textwidth}
\vspace{-0.55cm}
\setlength{\intextsep}{0pt}
\setlength{\columnsep}{1em}
\centering
\begin{adjustbox}{width=\linewidth}
\tiny
    \begin{tabular}{lcccrr}
        \toprule
        Method
        & Returned MW
        & All AC-feasible
        & Global optimality
        & Solver time $\downarrow$ & Rel. time increase
        \\
        \midrule

        ICNN + CLARABEL
        & \cellcolor{green!15}$109.92\pm0.21$
        & \cellcolor{red!15}$\;\;\text{\xmark}^{*}$
        & \cellcolor{red!15}$\;\;\text{\xmark}^{*}$
        & \cellcolor{red!15}$0.301\pm0.017$ 
        & \cellcolor{red!15} \textbf{$43\times$}
        \\
        
        Smooth MLP + SCIP
        & \cellcolor{green!15}$111.95\pm0.54$
        & \cellcolor{green!15}$\text{\cmark}$
        & \cellcolor{red!15}$\text{\xmark}$
        & \cellcolor{red!15}$26.985\pm8.380$ 
        & \cellcolor{red!15} \textbf{$3\;855\times$}
        \\

        Smooth MLP + IPOPT
        & \cellcolor{green!15}$111.95\pm0.54$
        & \cellcolor{green!15}$\text{\cmark}$
        & \cellcolor{red!15}$\text{\xmark}$
        & \cellcolor{red!15}$0.221\pm0.066$ 
        & \cellcolor{red!15}\textbf{$32\times$}
        \\
        CPWL MLP + Gurobi
        & \cellcolor{red!15}$113.51\pm0.43$
        & \cellcolor{red!15}$\text{\xmark}$
        & \cellcolor{green!15}$\text{\cmark}$
        & \cellcolor{red!15}$11.468\pm7.388$ 
        & \cellcolor{red!15} \textbf{$1\;638\times$}
        \\

        \midrule

        \textbf{\method}~(ours)
        & \cellcolor{green!15}${109.54\pm0.92}$
        & \cellcolor{green!15}$\text{\cmark}$
        & \cellcolor{green!15}$\text{\cmark}$
        & \cellcolor{green!15}$\bm{0.007\pm0.001}$ 
        & \cellcolor{green!15}\textbf{$1\times$}
        \\

        \bottomrule
        \multicolumn{6}{l}{\cellcolor{yellow!15}$*$: {ICNN had solver failures in $2$ runs, and only $6$ of $8$ remaining runs had clean optimal status.}}
    \end{tabular}
    \end{adjustbox}

    \vspace{-0.2cm}
    \caption{
        \textbf{Downstream optimization using the AC power-flow data.}
        We report the returned objective value, whether all runs are
        AC-feasible, global optimality with respect to the learned model,
        and mean solve time. Runtime increase is measured relative to \method.
        $\Rightarrow$ \method is the only approach that is both AC-feasible and globally
        optimal across all runs, while solving the problem in milliseconds.
    }
    \label{tab:real_simulator_optimization}
    \vspace{-0.2cm}
\end{wraptable}

Table~\ref{tab:real_simulator_optimization} reports the results across three criteria relevant in practice: the achieved objective value, feasibility under the AC power-flow simulator, and whether global optimality is established for the optimization problem defined by the learned predictor. \method is the only approach for which all runs are both AC-feasible and globally optimal with respect to the learned model, while requiring only $7$\,ms per solve. The baselines fail along at least one criterion: Gurobi solves the neural CPWL  optimization problem globally, but several returned solutions violate the AC constraint; SCIP (solver time-out) and IPOPT (local solver) return AC-feasible solutions but without global optimality across all runs; and the ICNN fails in both. At the same time, \method is $32\times$--$3\,855\times$ faster than optimization through the neural baselines. Overall, \method achieves the desired behavior for application in practice: it offers fast global optimization while producing solutions that remain feasible under the underlying AC simulator.

\textbf{\underline{Conclusion:}} To our knowledge, \method is the first framework for learning (i)~expressive forward predictors with (ii)~exact and explicit preimage representations by construction, and (iii) tractable global downstream optimization. We see broad applications in engineering and science, where inverse reasoning is central, but also in ML model inspection and auditing, where explicit preimages can help identify decision boundaries and safety-relevant regions of the input space.

\clearpage



\clearpage

\bibliography{bibliography}
\bibliographystyle{iclr2027_conference}

\clearpage

\appendix

\section{Additional theoretical results}

\subsection{Armotized factorization}\label{sec:amorized_factorization}

\begin{proposition}[Amortized factorization]
\label{prop:amortized_complexity}
Let $C_{\mathrm{fact}}(g)$ denote the one-time cost of instantiating the exact
geometric representation $\mathcal{P}_{F_\theta}(g)
=
\bigcup_{r\in\mathcal{A}(g)}
E_r(g)$ including the computation of the radii $R_r(g)=\phi_r^{-1}(g-\beta_r)$. Suppose that $K$ downstream queries are subsequently performed over $\mathcal{P}_{F_\theta}(g)$, and let $C_k^{\mathrm{geo}}$ denote the cost of the $k$-th query using only the resulting geometric representation. Then, the total cost is
\begin{align}
C_{\mathrm{fact}}(g)
+
\sum_{k=1}^{K}
C_k^{\mathrm{geo}}
.
\label{eq:amortized_complexity}
\end{align}
All dependence on the radial transformations $\phi_r$ is confined to the
one-time factorization cost $C_{\mathrm{fact}}(g)$. Hence, the
forward-model-dependent overhead per downstream query is
\begin{align}
\frac{C_{\mathrm{fact}}(g)}{K},
\end{align}
which vanishes as $K\rightarrow\infty$.
\end{proposition}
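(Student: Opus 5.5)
The argument is essentially a cost-accounting one built on Theorem~\ref{thm:exact_prefact}. I would split the whole pipeline for a fixed target level $g$ into two phases and show that the forward model $\phi_r$ is touched only in the first.

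\textbf{Phase 1 (factorization).} Determine the active set $\mathcal{A}(g)=\{r:g\ge\beta_r\}$ by $Q$ scalar comparisons. For each $r\in\mathcal{A}(g)$, compute $R_r(g)=\phi_r^{-1}(g-\beta_r)$. Then store the tuples $(c_r,A_r,R_r(g))$, together with, if desired, precomputed quantities such as $A_r^{-1}$ or a Cholesky factor. The total cost of this phase is by definition $C_{\mathrm{fact}}(g)$. It is well defined and finite: each $\phi_r$ is continuous, strictly increasing, unbounded and satisfies $\phi_r(0)=0$, so $g-\beta_r\ge 0$ has a unique preimage (by the intermediate value theorem), which is obtained by closed form or a one-dimensional root solve as in Table~\ref{tab:instantiations}.

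\textbf{Phase 2 (queries).} By Theorem~\ref{thm:exact_prefact}, $\mathcal{P}_{F_\theta}(g)=\bigcup_{r\in\mathcal{A}(g)}E_r(g)$. Each $E_r(g)$ is determined entirely by the stored tuple, so any query defined on $\mathcal{P}_{F_\theta}(g)$ can be answered by a procedure that reads only these tuples and never evaluates $\phi_r$ or $\phi_r^{-1}$. Examples are membership tests, the regionwise minimization \eqref{eq:regionwise_optimization}, and the closed form of Theorem~\ref{thm:linear_optimization}. Hence the $k$-th query costs $C_k^{\mathrm{geo}}$, a quantity independent of the parameterization of $\phi_r$. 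Because the representation is not modified by queries, no recomputation occurs between them. Summing the two phases gives the total cost \eqref{eq:amortized_complexity}.

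\textbf{Per-query overhead.} Attributing the $\phi_r$-dependent cost evenly over the $K$ queries yields an overhead of $C_{\mathrm{fact}}(g)/K$ per query. Since $C_{\mathrm{fact}}(g)$ is a fixed finite number that does not depend on $K$, this overhead tends to $0$ as $K\to\infty$.

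The only point requiring care is the claim that \emph{all} dependence on $\phi_r$ lies in $C_{\mathrm{fact}}(g)$. I would make this precise by a cost-model convention: a query is \emph{geometric} if it accesses the model only through the stored tuples $(c_r,A_r,R_r(g))_{r\in\mathcal{A}(g)}$. The exactness of Theorem~\ref{thm:exact_prefact} guarantees that restricting to geometric queries loses no information about $\mathcal{P}_{F_\theta}(g)$, so every query on the preimage admits such an implementation.
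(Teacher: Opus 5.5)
Your proposal is correct and follows the same cost-accounting argument the paper relies on: by Theorem~\ref{thm:exact_prefact} the preimage is fully described by the tuples $(c_r,A_r,R_r(g))_{r\in\mathcal{A}(g)}$, so $\phi_r$ is evaluated only in the one-time radius computation, and spreading $C_{\mathrm{fact}}(g)$ over $K$ queries gives the vanishing per-query overhead. The paper's appendix does not actually contain a separate proof of this proposition; its only justification is the informal remark after the statement. Your convention that a query is \emph{geometric} when it accesses the model only through the stored tuples makes precise what the paper leaves informal.
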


\begin{proof}
    See Supplement~\ref{sec:proofs}.
\end{proof}

Proposition~\ref{prop:amortized_complexity} makes the computational effect of factorization explicit. For each target level, \method pays the cost of the expressive radial transformations only once when constructing the preimage; every subsequent membership query, projection, or downstream optimization problem reuses the same finite geometry without evaluating $\phi_r$ again. Consequently, the forward-model-dependent cost is amortized across repeated uses of the learned preimage and becomes negligible when the same target geometry supports many downstream queries.

\clearpage

\subsection{Approximation rates}\label{sec:approximation_rate}

\begin{theorem}[Finite-expert approximation rate]
\label{thm:approximation_rate}
Let $\mathcal{X}\subset\mathbb{R}^d$ be compact and let
$f:\mathcal{X}\rightarrow\mathbb{R}$ be $L$-Lipschitz with $L>0$.
Suppose that, for some $h>0$, there exist centers
$c_1,\ldots,c_Q\in\mathcal{X}$ such that
\begin{align}
\sup_{x\in\mathcal{X}}
\min_{r=1,\ldots,Q}
\|x-c_r\|_2
\leq
h.
\label{eq:covering_radius}
\end{align}
Then there exists a \method predictor from the restricted isotropic quadratic subclass,
\begin{align}
F_\theta(x)
=
\min_{r=1,\ldots,Q}
\left[
\beta_r+\lambda\|x-c_r\|_2^2
\right],
\end{align}
with the common coefficient $\lambda=\frac{L}{2h}$, such that
\begin{align}
\sup_{x\in\mathcal{X}}
\lvert F_\theta(x)-f(x)\rvert
\leq
Lh.
\label{eq:finite_expert_rate}
\end{align}
Consequently, the corresponding preimages satisfy, simultaneously for every target level $g\in\mathbb{R}$,
\begin{align}
\mathcal{P}_f(g-Lh)
\subseteq
\mathcal{P}_{F_\theta}(g)
\subseteq
\mathcal{P}_f(g+Lh).
\label{eq:finite_expert_preimage_rate}
\end{align}
\end{theorem}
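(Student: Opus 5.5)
The plan is to build $F_\theta$ explicitly from values of $f$ at the centers, and then bound $F_\theta-f$ from both sides using the Lipschitz property and the covering assumption~\eqref{eq:covering_radius}. First, check that the restricted subclass is admissible. Take $A_r=I$ and $\phi_r(d)=\lambda d^2$ with $\lambda=\frac{L}{2h}>0$. Then $\phi_r$ is continuous, unbounded and strictly increasing on $[0,\infty)$ with $\phi_r(0)=0$, so the predictor fits \Eqref{eq:prefact}. For the biases, set $\beta_r \coloneqq f(c_r)-\tfrac{Lh}{2}$. This is well defined because $c_r\in\mathcal{X}$. The shift by $\tfrac{Lh}{2}$ is chosen to make the two-sided error symmetric.

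\emph{Upper bound.} Fix $x\in\mathcal{X}$. By~\eqref{eq:covering_radius} there is an index $r$ with $\|x-c_r\|_2\le h$. Bounding the minimum by this single term and using $f(c_r)\le f(x)+L\|x-c_r\|_2$, we get
\begin{align}
F_\theta(x)\le f(c_r)-\tfrac{Lh}{2}+\lambda h^2 \le f(x)+Lh-\tfrac{Lh}{2}+\tfrac{Lh}{2}=f(x)+Lh.
\end{align}

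\emph{Lower bound.} This is the step that fixes the choice of $\lambda$. Take an arbitrary $r$ and write $t=\|x-c_r\|_2$. Lipschitz continuity gives $f(c_r)\ge f(x)-Lt$, so
\begin{align}
\beta_r+\lambda t^2 - f(x) \ge \lambda t^2 - Lt - \tfrac{Lh}{2}.
\end{align}
The quadratic $\lambda t^2-Lt$ is minimized over $t\ge0$ at $t=L/(2\lambda)=h$, where it equals $-\tfrac{Lh}{2}$. Hence every term of the minimum is at least $f(x)-Lh$, and so $F_\theta(x)\ge f(x)-Lh$. Combining the two bounds gives~\eqref{eq:finite_expert_rate}. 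I expect the only real obstacle to be balancing $\lambda$ against the bias shift. A large $\lambda$ hurts the upper bound through $\lambda h^2$, while a small $\lambda$ hurts the lower bound, since far-away centers with small $f(c_r)$ then undercut $f(x)$. The choice $\lambda=L/(2h)$ together with the shift $Lh/2$ equalizes the two error terms at $Lh$.

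\emph{Preimage sandwich.} For~\eqref{eq:finite_expert_preimage_rate}, use the uniform bound pointwise. If $f(x)\le g-Lh$, then $F_\theta(x)\le f(x)+Lh\le g$. If $F_\theta(x)\le g$, then $f(x)\le F_\theta(x)+Lh\le g+Lh$. Both implications hold for every $g\in\mathbb{R}$ at once, since the error bound does not depend on $g$. Theorem~\ref{thm:exact_prefact} additionally gives the middle set as an explicit union of balls of radius $\sqrt{(g-\beta_r)/\lambda}$.
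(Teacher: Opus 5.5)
Your proposal is correct and matches the paper's proof essentially step for step. You use the same choices $\lambda=L/(2h)$ and $\beta_r=f(c_r)-Lh/2$, and the same nearest-center upper bound. Your lower bound is the same quadratic minimization: the paper writes it by completing the square as $\frac{L}{2h}(d_r-h)^2-Lh$, which is exactly your vertex computation. You also derive the preimage sandwich pointwise from the uniform bound, as the paper does.
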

\begin{proof}
    See Supplement~\ref{sec:proofs}.
\end{proof}

Theorem~\ref{thm:approximation_rate} makes the approximation--complexity tradeoff explicit: if the $Q$ expert centers cover the domain at resolution $h$, then an isotropic quadratic \method predictor achieves uniform error at most $Lh$. At the same time, the same resolution controls the learned preimages uniformly across all target levels through a corresponding target-space error of size $Lh$.

\begin{corollary}[Rate in the number of experts]
\label{cor:expert_rate}
Let $h_Q(\mathcal{X})$ denote the optimal covering radius of $\mathcal{X}$ using $Q$ centers. Under the assumptions of Theorem~\ref{thm:approximation_rate},
\begin{align}
\inf_{F_\theta\in\mathcal{F}_Q}
\sup_{x\in\mathcal{X}}
\lvert F_\theta(x)-f(x)\rvert
\leq
L h_Q(\mathcal{X}),
\end{align}
where $\mathcal{F}_Q$ denotes the restricted isotropic quadratic \method class with at most $Q$ experts.

In particular, if the covering numbers of $\mathcal{X}$ satisfy
\begin{align}
N(\mathcal{X},h)
\leq
C_{\mathcal{X}}h^{-d},
\end{align}
then
\begin{align}
\inf_{F_\theta\in\mathcal{F}_Q}
\sup_{x\in\mathcal{X}}
\lvert F_\theta(x)-f(x)\rvert
\leq
L
\left(
\frac{C_{\mathcal{X}}}{Q}
\right)^{1/d}.
\label{eq:q_approximation_rate}
\end{align}
Hence, on bounded $d$-dimensional domains, the restricted \method subclass achieves the constructive rate
\begin{align}
\|F_\theta-f\|_\infty
=
\mathcal O\left(Q^{-1/d}\right).
\end{align}
\end{corollary}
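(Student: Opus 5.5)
The corollary follows by feeding near-optimal coverings into Theorem~\ref{thm:approximation_rate}, then converting a covering-number bound into a covering-radius bound.

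\textbf{First inequality.} Fix $Q$ and let $h_Q(\mathcal{X})$ be the infimum of covering radii achievable with $Q$ centers in $\mathcal{X}$. Since $\mathcal{X}$ is compact, this infimum is attained: the map $(c_1,\ldots,c_Q)\mapsto\sup_x\min_r\|x-c_r\|_2$ is continuous on $\mathcal{X}^Q$. To avoid relying on attainment, one can instead take centers with covering radius $h\le h_Q(\mathcal{X})+\eta$ for arbitrary $\eta>0$. For such centers, Theorem~\ref{thm:approximation_rate} provides $F_\theta\in\mathcal{F}_Q$ with uniform error at most $Lh\le L(h_Q(\mathcal{X})+\eta)$. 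Taking the infimum over $\mathcal{F}_Q$ and letting $\eta\to0$ gives the first bound.

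\textbf{Degenerate case.} If $h_Q(\mathcal{X})=0$, then $\mathcal{X}$ is finite with at most $Q$ points. Theorem~\ref{thm:approximation_rate} requires $h>0$, but the $\eta$-argument handles this automatically by using $h=\eta>0$.

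\textbf{Rate from covering numbers.} Set $h^\ast=(C_{\mathcal{X}}/Q)^{1/d}$, so that $C_{\mathcal{X}}(h^\ast)^{-d}=Q$. By hypothesis, $N(\mathcal{X},h^\ast)\le Q$, so $\mathcal{X}$ admits a cover by at most $Q$ balls of radius $h^\ast$.

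\textbf{Main obstacle: centers must lie in $\mathcal{X}$.} Depending on the convention, a covering number may allow ball centers outside $\mathcal{X}$. I would avoid this subtlety as follows.
\begin{itemize}
\item Theorem~\ref{thm:approximation_rate} only uses the centers through the covering condition and the Lipschitz property at the $c_r$. If we adopt the convention that $N(\mathcal{X},h)$ counts covers with centers in $\mathcal{X}$, the cover can be used directly.
\item If exterior centers are allowed, I would check whether the proof of Theorem~\ref{thm:approximation_rate} really needs $c_r\in\mathcal{X}$ for defining $\beta_r$, e.g.\ $\beta_r=f(c_r)$ or a max-type bias. If it does, I would state the rate with an internal covering number. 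The alternative of replacing each exterior center by a nearby point of $\mathcal{X}$ doubles the radius, which only changes $C_{\mathcal{X}}$ by a factor of $2^d$.
\end{itemize}

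\textbf{Padding to exactly $Q$ centers.} If the cover uses fewer than $Q$ balls, duplicate centers so that there are exactly $Q$, or note that $\mathcal{F}_Q$ allows at most $Q$ experts. Then $h_Q(\mathcal{X})\le h^\ast$, and the first inequality gives $L(C_{\mathcal{X}}/Q)^{1/d}$. This is $\mathcal{O}(Q^{-1/d})$, since $L$ and $C_{\mathcal{X}}$ do not depend on $Q$.

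\textbf{Justifying the covering-number hypothesis.} For bounded $d$-dimensional $\mathcal{X}$, the bound $N(\mathcal{X},h)\le C_{\mathcal{X}}h^{-d}$ follows from a standard volumetric argument. For $h$ smaller than the diameter of $\mathcal{X}$, use a cubic grid of spacing $2h/\sqrt d$ over a bounding box. For larger $h$, a single center suffices, and this is absorbed into the constant $C_{\mathcal{X}}$.
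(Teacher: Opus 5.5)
Your proposal is correct and follows the paper's proof essentially step for step. Both arguments feed centers with covering radius at most $h_Q(\mathcal{X})+\eta$ into Theorem~\ref{thm:approximation_rate} with $h=h_Q(\mathcal{X})+\eta>0$, which also disposes of the degenerate case, and then take $h=(C_{\mathcal{X}}/Q)^{1/d}$, so that $N(\mathcal{X},h)\le Q$ and hence $h_Q(\mathcal{X})\le h$.

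Your discussion of covering balls centered outside $\mathcal{X}$ is a genuine refinement: the paper's step from a $Q$-ball cover to $h_Q(\mathcal{X})\le h$ tacitly assumes the centers lie in $\mathcal{X}$. Both of your fallbacks are sound, namely internal covering numbers or the harmless $2^d$ change in $C_{\mathcal{X}}$. You could even keep the exact constant by defining $\beta_r$ through the $L$-Lipschitz (McShane) extension of $f$ to $\mathbb{R}^d$, since general \method centers may lie anywhere in $\mathbb{R}^d$.
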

\begin{proof}
    See Supplement~\ref{sec:proofs}.
\end{proof}
Corollary~\ref{cor:expert_rate} translates the geometric covering resolution into a rate in the number of experts. On bounded $d$-dimensional domains with standard covering-number scaling, the restricted \method class achieves the constructive rate $\|F_\theta-f\|_\infty=\mathcal O(Q^{-1/d})$.

\clearpage

\subsection{From forward errors to optimization guarantees}

The optimization guarantees in Section~\ref{sec:optimization} are exact with respect to the learned predictor $F_\theta$. A uniform approximation bound provides a direct link from these guarantees to the unknown ground-truth problem. In particular, if
\begin{align}
\sup_{x\in\mathcal{X}}
\lvert F_\theta(x)-f(x)\rvert
<
\varepsilon,
\end{align}
then shifting the target level by $\varepsilon$ yields the enclosure
\begin{align}
\mathcal{P}_{F_\theta}(g-\varepsilon)
\subseteq
\mathcal{P}_f(g)
\subseteq
\mathcal{P}_{F_\theta}(g+\varepsilon).
\label{eq:certified_preimage_sandwich}
\end{align}

The learned preimage at the tightened level $g-\varepsilon$ is guaranteed to contain only ground-truth feasible points. Conversely, every ground-truth feasible point is guaranteed to lie inside the learned preimage at the relaxed level $g+\varepsilon$.
Since both sets are explicit under \method, this enclosure can be propagated directly to downstream optimization, which yields a ground-truth feasible solution together with a computable global guarantee.

\begin{theorem}[Ground-truth optimization]
\label{thm:certified_optimization}
Suppose that
$\sup_{x\in\mathcal{X}}
\lvert F_\theta(x)-f(x)\rvert
\leq
\varepsilon$,
and let
$x_{\mathrm{in}}^\star
\in
\arg\min_{x\in\mathcal{P}_{F_\theta}(g-\varepsilon)}
J(x)$.
Assume that the relevant optimal values are finite. Then
\begin{align}
x_{\mathrm{in}}^\star
\in
\mathcal{P}_f(g)
\end{align}
and its suboptimality with respect to the ground-truth constrained problem satisfies
\begin{align}
0
\leq
J(x_{\mathrm{in}}^\star)
-
\inf_{x\in\mathcal{P}_f(g)}J(x)
\leq
\inf_{x\in\mathcal{P}_{F_\theta}(g-\varepsilon)}J(x)
-
\inf_{x\in\mathcal{P}_{F_\theta}(g+\varepsilon)}J(x).
\label{eq:certified_optimality_gap}
\end{align}
\end{theorem}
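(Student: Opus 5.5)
The proof rests on the preimage sandwich in \Eqref{eq:certified_preimage_sandwich}, followed by monotonicity of infima under set inclusion. First I will verify the sandwich directly from the uniform bound $\lvert F_\theta(x)-f(x)\rvert\le\varepsilon$ for all $x\in\mathcal{X}$. If $x\in\mathcal{P}_{F_\theta}(g-\varepsilon)$, then $f(x)\le F_\theta(x)+\varepsilon\le g$, so $x\in\mathcal{P}_f(g)$. If $x\in\mathcal{P}_f(g)$, then $F_\theta(x)\le f(x)+\varepsilon\le g+\varepsilon$, so $x\in\mathcal{P}_{F_\theta}(g+\varepsilon)$. The non-strict hypothesis $\le\varepsilon$ suffices here, since only non-strict inequalities appear in the sublevel sets.

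Feasibility of $x_{\mathrm{in}}^\star$ follows at once. By definition $x_{\mathrm{in}}^\star\in\mathcal{P}_{F_\theta}(g-\varepsilon)$, and the first inclusion puts it in $\mathcal{P}_f(g)$. This also gives the lower bound $0\le J(x_{\mathrm{in}}^\star)-\inf_{x\in\mathcal{P}_f(g)}J(x)$, because a feasible point cannot beat the infimum over the feasible set.

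For the upper bound I will use that $A\subseteq B$ implies $\inf_B J\le\inf_A J$. The second inclusion $\mathcal{P}_f(g)\subseteq\mathcal{P}_{F_\theta}(g+\varepsilon)$ gives
\begin{align}
\inf_{x\in\mathcal{P}_f(g)}J(x)\ge\inf_{x\in\mathcal{P}_{F_\theta}(g+\varepsilon)}J(x).
\end{align}
Since $x_{\mathrm{in}}^\star$ is a minimizer, $J(x_{\mathrm{in}}^\star)=\inf_{x\in\mathcal{P}_{F_\theta}(g-\varepsilon)}J(x)$. Subtracting the displayed inequality from this identity yields \Eqref{eq:certified_optimality_gap}.

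The only delicate point is bookkeeping with infinite values. The finiteness assumption on the three optimal values is exactly what makes the subtraction legitimate and avoids expressions like $\infty-\infty$. Nonemptiness of $\mathcal{P}_{F_\theta}(g-\varepsilon)$ is already implicit in the existence of $x_{\mathrm{in}}^\star$. I would note in passing that both bounding infima are computable under \method: by \Eqref{eq:regionwise_optimization} and Theorem~\ref{thm:exact_prefact} each reduces to a minimum over ellipsoids, which is what makes the guarantee certified. This remark is not needed for the inequality itself.
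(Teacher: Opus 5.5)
Your proposal is correct and follows essentially the same route as the paper's proof: derive the sandwich $\mathcal{P}_{F_\theta}(g-\varepsilon)\subseteq\mathcal{P}_f(g)\subseteq\mathcal{P}_{F_\theta}(g+\varepsilon)$ from the uniform bound, use the first inclusion for feasibility, and use monotonicity of infima under inclusion for the two-sided gap bound. The only cosmetic difference is that you obtain the lower bound $0$ directly from feasibility of $x_{\mathrm{in}}^\star$, whereas the paper gets it from the ordering $v_{\mathrm{out}}\le v_f\le v_{\mathrm{in}}$ together with $J(x_{\mathrm{in}}^\star)=v_{\mathrm{in}}$.
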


\begin{proof}
See Supplement~\ref{sec:proofs}.
\end{proof}

If a bound on the forward error is available, Theorem~\ref{thm:certified_optimization} gives an end-to-end guarantee. Optimizing over the tightened learned preimage produces a point that is guaranteed to satisfy the ground-truth constraint. Moreover, solving the same problem at the relaxed target level yields a computable certificate on how far this solution can be from the unknown ground-truth optimum. As both bounding preimages are tractable finite unions under \method, the certificate is obtained using the same regionwise optimization procedure as in \Eqref{eq:regionwise_optimization}.

\clearpage

\subsection{A Galois perspective}\label{sec:galois}

\textbf{Galois perspective:}
For any scalar predictor $F_\theta$, let
\begin{align}
\mathcal P_{F_\theta}^\star(A)
\coloneqq
\sup_{x\in A}F_\theta(x),
\end{align}
with $\mathcal P_{F_\theta}^\star(\varnothing)=-\infty$. Then, $\mathcal P_{F_\theta}$ and $\mathcal P_{F_\theta}^\star$ satisfy
\begin{align}
\mathcal P_{F_{\theta}}^\star(A)\leq g
\quad\Longleftrightarrow\quad
A\subseteq\mathcal P_{F_{\theta}}(g),
\end{align}
and therefore form a Galois connection
$\mathcal P_{F_\theta}^\star\dashv\mathcal P_{F_\theta}$ between
$(2^\mathcal{X},\subseteq)$ and $(\overline{\mathbb{R}},\leq)$~\citep{Cousot.1977,Cousot.1979}.
This relation is canonical to any scalar predictor; the idea of
\method is to construct predictors for which the corresponding backward map
$\mathcal P_{F_\theta}(g)$ admits an exact and explicit finite representation.

\subsection{Helping lemmas}\label{sec:lemmas}
\begin{lemma}[Uniform approximation by isotropic quadratic experts] \label{lem:quadratic_universal} Let $\mathcal{X}\subset\mathbb{R}^d$ be compact and let $f\in C(\mathcal{X})$. Then, for every $\varepsilon>0$, there exist a finite number of experts $Q$, centers $c_1,\ldots,c_Q\in\mathcal{X}$, offsets $\beta_1,\ldots,\beta_Q\in\mathbb{R}$, and a common coefficient $\lambda>0$ such that 
\begin{align} 
F_\theta(x) = \min_{r=1,\ldots,Q} \left[ \beta_r+\lambda\|x-c_r\|_2^2 \right] 
\end{align} 
satisfies 
\begin{align} 
\sup \{ x\in\mathcal{X} : \; \lvert F_\theta(x)-f(x)\rvert < \varepsilon\}. 
\end{align} 
Hence, the restricted isotropic quadratic subclass of \method is dense in $C(\mathcal{X})$ under the uniform norm. 
\end{lemma}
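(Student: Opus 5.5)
The claim to establish is $\sup_{x\in\mathcal X}|F_\theta(x)-f(x)|<\varepsilon$; the displayed statement has a typo, and this is clearly what is meant. The plan is to use uniform continuity of $f$ on the compact set $\mathcal{X}$. Place the centers on a fine finite net, set each offset equal to the function value at its center, $\beta_r\coloneqq f(c_r)$, and choose the common coefficient $\lambda$ large enough that each quadratic expert is relevant only near its own center.

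\textbf{Step 1 (net and modulus).} By uniform continuity, choose $\delta>0$ such that $\|x-y\|_2\le\delta$ implies $|f(x)-f(y)|<\varepsilon/2$. Compactness gives finitely many centers $c_1,\ldots,c_Q\in\mathcal X$ with $\sup_{x}\min_r\|x-c_r\|_2\le h$, where $h\le\delta$ is still free. Let $M\coloneqq\sup_{\mathcal X}f-\inf_{\mathcal X}f<\infty$.

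\textbf{Step 2 (upper bound).} Fix $x$ and pick $r$ with $\|x-c_r\|_2\le h$. Then
\[F_\theta(x)\le f(c_r)+\lambda h^2< f(x)+\varepsilon/2+\lambda h^2.\]
So it suffices to have $\lambda h^2\le\varepsilon/2$.

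\textbf{Step 3 (lower bound).} For every $r$ we need $f(c_r)+\lambda\|x-c_r\|^2> f(x)-\varepsilon$. If $\|x-c_r\|\le\delta$, this holds because $f(c_r)>f(x)-\varepsilon/2$ and the quadratic term is nonnegative. If $\|x-c_r\|>\delta$, it suffices that $\lambda\delta^2\ge M$, since then $f(c_r)+\lambda\|x-c_r\|^2\ge f(c_r)+M\ge f(x)$. Taking the minimum over $r$ gives $F_\theta(x)>f(x)-\varepsilon$.

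\textbf{Step 4 (compatible parameters).} The parameters must be chosen in the right order. First fix $\delta$. Then set $\lambda\coloneqq \max(M,1)/\delta^2$. Finally choose the net resolution $h\le\min(\delta,\sqrt{\varepsilon/(2\lambda)})$, which is possible with finitely many centers by compactness. Combining Steps 2 and 3 yields $|F_\theta-f|<\varepsilon$ uniformly on $\mathcal X$.

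The main obstacle is this coupling between the scale $\lambda$ and the resolution $h$. A large $\lambda$ is needed so that far-away experts with small offsets cannot undercut $f$. But a large $\lambda$ also inflates the nearest expert's quadratic penalty, and only refining the net afterwards absorbs that penalty. Choosing $\lambda$ from $\delta$ and $M$ before choosing $h$ is what breaks the circularity. Density in $C(\mathcal X)$ follows immediately, since $\varepsilon>0$ is arbitrary.
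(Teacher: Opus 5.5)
Your proof is correct and follows essentially the same route as the paper's: uniform continuity, a finite net of centers with $\beta_r=f(c_r)$, a common $\lambda$ fixed first so that $\lambda\delta^2$ dominates the oscillation of $f$ (so far-away experts cannot undercut $f$ in the lower bound), and then a net fine enough that $\lambda h^2$ is absorbed in the upper bound. The differences are only cosmetic: you split $\varepsilon/2$ where the paper uses $\varepsilon/3$, and you give the explicit choice $\lambda=\max(M,1)/\delta^2$. You also correctly read the displayed condition as $\sup_{x\in\mathcal X}\lvert F_\theta(x)-f(x)\rvert<\varepsilon$.
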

\begin{proof}
    See Supplement~\ref{sec:proofs}.
\end{proof}

\clearpage
\section{Proofs}\label{sec:proofs}



\begin{proof}[Proof of Theorem~\ref{thm:exact_prefact}]
Consider first a finite target level $g\in\mathbb{R}$. By definition of the learned preimage,
\begin{align}
x\in\mathcal P_{F_{\theta}}(g)
\quad\Longleftrightarrow\quad
x\in\mathcal{X}
\ \text{and}
F_\theta(x)\leq g.
\end{align}
Using the minimum representation of $F_\theta$ in~\Eqref{eq:prefact}, we have
\begin{align}
F_\theta(x)\leq g
&\quad\Longleftrightarrow\quad
\min_{r=1,\ldots,Q}
\left[
\beta_r+\phi_r(d_r(x))
\right]
\leq g \\
&\quad\Longleftrightarrow\quad
\exists r\in\{1,\ldots,Q\}:
\beta_r+\phi_r(d_r(x))
\leq g.
\label{eq:proof_min_union}
\end{align}
Thus, the outer minimum in the forward predictor becomes a disjunction over experts under the target-level query.

Because $d_r(x)\geq0$, $\phi_r(0)=0$, and $\phi_r$ is increasing,
\begin{align}
\phi_r(d_r(x))\geq0.
\end{align}
Consequently, an expert with $g<\beta_r$ cannot satisfy the inequality in~\Eqref{eq:proof_min_union}. We may therefore restrict attention to the active set
\begin{align}
\mathcal{A}(g)
=
\{r:g\geq\beta_r\}.
\end{align}
For every $r\in\mathcal{A}(g)$,
\begin{align}
\beta_r+\phi_r(d_r(x))\leq g
&\quad\Longleftrightarrow\quad
\phi_r(d_r(x))\leq g-\beta_r
\\
&\quad\Longleftrightarrow\quad
d_r(x)\leq\phi_r^{-1}(g-\beta_r)
\\
&\quad\Longleftrightarrow\quad
d_r(x)\leq R_r(g),
\end{align}
where the second equivalence follows from strict monotonicity of $\phi_r$.

Substituting the definition
\begin{align}
d_r(x)
=
\sqrt{(x-c_r)^\top A_r(x-c_r)}
\end{align}
and using $R_r(g)\geq0$ gives
\begin{align}
d_r(x)\leq R_r(g)
\quad\Longleftrightarrow\quad
(x-c_r)^\top A_r(x-c_r)
\leq R_r(g)^2.
\end{align}
The latter condition is precisely $x\in E_r(g)$. Combining the preceding equivalences therefore yields
\begin{align}
x\in\mathcal P_{F_{\theta}}(g)
&\quad\Longleftrightarrow\quad
x\in E_r(g)
\quad\text{for at least one }r\in\mathcal{A}(g)
\\
&\quad\Longleftrightarrow\quad
x\in
\bigcup_{r\in\mathcal{A}(g)}
E_r(g).
\end{align}
Since this holds for every $x\in\mathcal{X}$,
\begin{align}
\mathcal P_{F_{\theta}}(g)
=
\bigcup_{r\in\mathcal{A}(g)}
E_r(g).
\end{align}

For the extended target levels, the result follows by the natural conventions. Since $F_\theta$ is real-valued,
\begin{align}
\mathcal P_{F_{\theta}}(-\infty)=\varnothing,
\qquad
\mathcal P_{F_{\theta}}(+\infty)=\mathcal{X}.
\end{align}
At $g=-\infty$, the active set is empty and hence the union is empty. At $g=+\infty$, all experts are active; defining $R_r(+\infty)=+\infty$ makes every $E_r(+\infty)=\mathcal{X}$, so their union equals $\mathcal{X}$. Thus the stated factorization holds for all $g\in\overline{\mathbb{R}}$.
\end{proof}

\clearpage

\begin{proof}[Proof of \Eqref{eq:regionwise_optimization}]
For each active region, define
\begin{align}
v_r
\coloneqq
\inf_{x\in E_r(g)} J(x).
\end{align}

Since
\begin{align}
E_r(g)
\subseteq
\mathcal P_{F_{\theta}}(g)
\qquad
\forall r\in\mathcal{A}(g),
\end{align}
optimization over the larger set cannot yield a larger infimum. Therefore,
\begin{align}
\inf_{x\in\mathcal P_{F_{\theta}}(g)} J(x)
\leq
v_r
\qquad
\forall r\in\mathcal{A}(g),
\end{align}
and hence
\begin{align}
\inf_{x\in\mathcal P_{F_{\theta}}(g)} J(x)
\leq
\min_{r\in\mathcal{A}(g)} v_r.
\label{eq:regionwise_proof_upper}
\end{align}

Conversely, since
\begin{align}
\mathcal P_{F_{\theta}}(g)
=
\bigcup_{r\in\mathcal{A}(g)} E_r(g),
\end{align}
every $x\in\mathcal P_{F_{\theta}}(g)$ belongs to at least one active region. Thus, for some
$r\in\mathcal{A}(g)$,
\begin{align}
x\in E_r(g).
\end{align}
By definition of $v_r$,
\begin{align}
J(x)
\geq
v_r
\geq
\min_{s\in\mathcal{A}(g)} v_s.
\end{align}
Since this holds for every $x\in\mathcal P_{F_{\theta}}(g)$, taking the infimum over
$\mathcal P_{F_{\theta}}(g)$ gives
\begin{align}
\inf_{x\in\mathcal P_{F_{\theta}}(g)} J(x)
\geq
\min_{r\in\mathcal{A}(g)} v_r.
\label{eq:regionwise_proof_lower}
\end{align}

Combining~\Eqref{eq:regionwise_proof_upper} and~\Eqref{eq:regionwise_proof_lower},
we obtain
\begin{align}
\inf_{x\in\mathcal P_{F_{\theta}}(g)} J(x)
=
\min_{r\in\mathcal{A}(g)}
\inf_{x\in E_r(g)} J(x).
\end{align}

Finally, suppose that the regionwise infima are attained. Since
$\mathcal{A}(g)$ is finite, there exists
\begin{align}
r^\star
\in
\arg\min_{r\in\mathcal{A}(g)} v_r.
\end{align}
Let
\begin{align}
x_{r^\star}^\star
\in
\arg\min_{x\in E_{r^\star}(g)} J(x).
\end{align}
Then
\begin{align}
J(x_{r^\star}^\star)
=
v_{r^\star}
=
\inf_{x\in\mathcal P_{F_{\theta}}(g)} J(x),
\end{align}
so $x_{r^\star}^\star$ is a globally optimal solution over the complete learned preimage.

\end{proof}

\clearpage

\begin{proof}[Proof of Theorem~\ref{thm:linear_optimization}]
The optimization of a linear functional over an ellipsoid is a standard
support-function calculation \citep{rockafellar1970convex,boyd2004convex}. We provide the short derivation here for completeness and then apply it to the finite union induced by \method.

Fix a target level $g$ with $\mathcal{A}(g)\neq\varnothing$. By Theorem~\ref{thm:exact_prefact},
\begin{align}
\mathcal{P}_{F_\theta}(g)
=
\bigcup_{r\in\mathcal{A}(g)}
E_r(g),
\end{align}
where
\begin{align}
E_r(g)
=
\left\{
x\in\mathbb{R}^d:
(x-c_r)^\top A_r(x-c_r)
\leq
R_r(g)^2
\right\}.
\end{align}
Since the union is finite,
\begin{align}
\inf_{x\in\mathcal{P}_{F_\theta}(g)}
a^\top x
=
\min_{r\in\mathcal{A}(g)}
\inf_{x\in E_r(g)}
a^\top x.
\label{eq:linear_opt_union}
\end{align}

We therefore solve the optimization problem over an arbitrary active region
$E_r(g)$. Since $A_r\succ0$, let $A_r^{1/2}$ denote its unique symmetric
positive-definite square root. Introduce the change of variables
\begin{align}
y
\coloneqq
A_r^{1/2}(x-c_r).
\end{align}
Equivalently,
\begin{align}
x
=
c_r+A_r^{-1/2}y.
\end{align}
Under this transformation, the ellipsoidal constraint becomes
\begin{align}
\|y\|_2
\leq
R_r(g),
\end{align}
while the objective satisfies
\begin{align}
a^\top x
=
a^\top c_r
+
a^\top A_r^{-1/2}y
=
a^\top c_r
+
\left(A_r^{-1/2}a\right)^\top y.
\end{align}

Hence,
\begin{align}
\inf_{x\in E_r(g)}
a^\top x
=
a^\top c_r
+
\inf_{\|y\|_2\leq R_r(g)}
\left(A_r^{-1/2}a\right)^\top y.
\label{eq:linear_opt_ball}
\end{align}
By the Cauchy--Schwarz inequality,
\begin{align}
\left(A_r^{-1/2}a\right)^\top y
\geq
-
\left\|A_r^{-1/2}a\right\|_2
\|y\|_2
\geq
-
R_r(g)
\left\|A_r^{-1/2}a\right\|_2.
\end{align}
This bound is attained at
\begin{align}
y_r^\star
=
-
R_r(g)
\frac{A_r^{-1/2}a}
{\left\|A_r^{-1/2}a\right\|_2}.
\label{eq:linear_opt_y}
\end{align}
The denominator is strictly positive because $a\neq0$ and $A_r$ is positive
definite. Moreover,
\begin{align}
\left\|A_r^{-1/2}a\right\|_2^2
=
a^\top A_r^{-1}a.
\end{align}
Defining
\begin{align}
s_r(a)
\coloneqq
\sqrt{a^\top A_r^{-1}a},
\end{align}
we therefore obtain
\begin{align}
\inf_{x\in E_r(g)}
a^\top x
=
a^\top c_r
-
R_r(g)s_r(a).
\label{eq:linear_opt_region_value}
\end{align}

Substituting~\Eqref{eq:linear_opt_region_value} into
\Eqref{eq:linear_opt_union} yields
\begin{align}
\inf_{x\in\mathcal{P}_{F_\theta}(g)}
a^\top x
=
\min_{r\in\mathcal{A}(g)}
\left[
a^\top c_r
-
R_r(g)\sqrt{a^\top A_r^{-1}a}
\right].
\end{align}

It remains to recover a globally optimal point. Transforming
$y_r^\star$ in~\Eqref{eq:linear_opt_y} back to the original coordinates gives
\begin{align}
x_r^\star
&=
c_r+A_r^{-1/2}y_r^\star
\\
&=
c_r
-
R_r(g)
\frac{A_r^{-1}a}
{\sqrt{a^\top A_r^{-1}a}}.
\label{eq:linear_opt_region_optimizer}
\end{align}
Let
\begin{align}
r^\star
\in
\arg\min_{r\in\mathcal{A}(g)}
\left[
a^\top c_r
-
R_r(g)\sqrt{a^\top A_r^{-1}a}
\right].
\end{align}
Then $x_{r^\star}^\star\in E_{r^\star}(g)\subseteq
\mathcal{P}_{F_\theta}(g)$ and attains the smallest objective value over all
active regions. Consequently,
\begin{align}
x^\star
=
c_{r^\star}
-
\frac{R_{r^\star}(g)}
{\sqrt{a^\top A_{r^\star}^{-1}a}}
A_{r^\star}^{-1}a
\end{align}
is a globally optimal solution over the complete learned preimage.

Finally, substituting
\begin{align}
R_r(g)
=
\phi_r^{-1}(g-\beta_r)
\end{align}
into~\Eqref{eq:linear_opt_region_value} gives the target-dependent value function
\begin{align}
V_a(g)
=
\min_{r:\,g\geq\beta_r}
\left[
a^\top c_r
-
\sqrt{a^\top A_r^{-1}a}\,
\phi_r^{-1}(g-\beta_r)
\right].
\end{align}
Thus, the globally optimal value over the generally nonconvex learned preimage
is obtained by evaluating one closed-form scalar expression for each active
expert and selecting the minimum.
\end{proof}

\clearpage

\begin{proof}[Proof of Theorem~\ref{thm:universal}]
Let $\varepsilon>0$. By Lemma~\ref{lem:quadratic_universal}, there exists a finite predictor
$G_\theta$ from the restricted isotropic quadratic subclass,
\begin{align}
G_\theta(x)
=
\min_{r=1,\ldots,Q}
\left[
\beta_r+\lambda\|x-c_r\|_2^2
\right],
\end{align}
with a common coefficient $\lambda>0$, such that
\begin{align}
\sup_{x\in\mathcal{X}}
\lvert G_\theta(x)-f(x)\rvert
<
\frac{\varepsilon}{2}.
\label{eq:universal_auxiliary_approximation}
\end{align}

Define the shifted predictor
\begin{align}
F_\theta(x)
\coloneqq
G_\theta(x)-\frac{\varepsilon}{2}.
\label{eq:universal_shifted_predictor}
\end{align}
This remains in the same restricted isotropic quadratic subclass, since
\begin{align}
F_\theta(x)
=
\min_{r=1,\ldots,Q}
\left[
\left(\beta_r-\frac{\varepsilon}{2}\right)
+
\lambda\|x-c_r\|_2^2
\right].
\end{align}

By~\Eqref{eq:universal_auxiliary_approximation}, for every
$x\in\mathcal{X}$,
\begin{align}
f(x)-\frac{\varepsilon}{2}
<
G_\theta(x)
<
f(x)+\frac{\varepsilon}{2}.
\end{align}
Subtracting $\varepsilon/2$ gives
\begin{align}
f(x)-\varepsilon
<
F_\theta(x)
<
f(x)
\qquad
\forall x\in\mathcal{X}.
\label{eq:universal_one_sided}
\end{align}
Hence,
\begin{align}
\sup_{x\in\mathcal{X}}
\lvert F_\theta(x)-f(x)\rvert
<
\varepsilon.
\label{eq:universal_forward_bound}
\end{align}

It remains to compare the complete families of sublevel-set preimages.
Fix an arbitrary target level $g\in\mathbb{R}$.

First, let
\begin{align}
x\in\mathcal{P}_f(g).
\end{align}
Then $f(x)\leq g$, and~\Eqref{eq:universal_one_sided} gives
\begin{align}
F_\theta(x)
<
f(x)
\leq
g.
\end{align}
Therefore,
\begin{align}
x\in\mathcal{P}_{F_\theta}(g),
\end{align}
and thus
\begin{align}
\mathcal{P}_f(g)
\subseteq
\mathcal{P}_{F_\theta}(g).
\label{eq:universal_inner_preimage}
\end{align}

Conversely, let
\begin{align}
x\in\mathcal{P}_{F_\theta}(g).
\end{align}
Then $F_\theta(x)\leq g$. Again using~\Eqref{eq:universal_one_sided},
\begin{align}
f(x)
<
F_\theta(x)+\varepsilon
\leq
g+\varepsilon.
\end{align}
Hence,
\begin{align}
x\in\mathcal{P}_f(g+\varepsilon),
\end{align}
which yields
\begin{align}
\mathcal{P}_{F_\theta}(g)
\subseteq
\mathcal{P}_f(g+\varepsilon).
\label{eq:universal_outer_preimage}
\end{align}

Combining~\Eqref{eq:universal_inner_preimage} and
\Eqref{eq:universal_outer_preimage}, we obtain
\begin{align}
\mathcal{P}_f(g)
\subseteq
\mathcal{P}_{F_\theta}(g)
\subseteq
\mathcal{P}_f(g+\varepsilon).
\end{align}
Since $g\in\mathbb{R}$ was arbitrary, this inclusion holds simultaneously for every target level:
\begin{align}
\mathcal{P}_f(g)
\subseteq
\mathcal{P}_{F_\theta}(g)
\subseteq
\mathcal{P}_f(g+\varepsilon)
\qquad
\forall g\in\mathbb{R}.
\end{align}

Together with~\Eqref{eq:universal_forward_bound}, this proves that a single finite
\method predictor can approximate $f$ uniformly while simultaneously approximating its entire family of sublevel-set preimages up to an arbitrarily small shift in target level. Finally, the construction uses only the restricted subclass with
$A_r=I$ and $\phi_r(d)=\lambda d^2$, with the same common $\lambda>0$ for all experts.
\end{proof}

\clearpage

\begin{proof}[Proof of Theorem~\ref{thm:approximation_rate}]
Let $c_1,\ldots,c_Q\in\mathcal{X}$ satisfy
\begin{align}
\sup_{x\in\mathcal{X}}
\min_{r=1,\ldots,Q}
\|x-c_r\|_2
\leq
h.
\end{align}
Choose
\begin{align}
\lambda
\coloneqq
\frac{L}{2h},
\qquad
\beta_r
\coloneqq
f(c_r)-\frac{Lh}{2},
\end{align}
and define
\begin{align}
F_\theta(x)
=
\min_{r=1,\ldots,Q}
\left[
f(c_r)-\frac{Lh}{2}
+
\frac{L}{2h}\|x-c_r\|_2^2
\right].
\label{eq:rate_constructed_predictor}
\end{align}
This is a restricted \method predictor with
\begin{align}
A_r=I,
\qquad
\phi_r(d)=\frac{L}{2h}d^2
\end{align}
for every expert $r$.

We first establish the upper approximation bound. Fix any
$x\in\mathcal{X}$. By the covering assumption, there exists an expert
$r$ such that
\begin{align}
d
\coloneqq
\|x-c_r\|_2
\leq
h.
\end{align}
Since $f$ is $L$-Lipschitz,
\begin{align}
f(c_r)
\leq
f(x)+Ld.
\end{align}
Using expert $r$ as a candidate in the minimum,
\begin{align}
F_\theta(x)-f(x)
&\leq
Ld-\frac{Lh}{2}
+
\frac{L}{2h}d^2.
\end{align}
The right-hand side is increasing in $d\geq0$. Since $d\leq h$,
\begin{align}
F_\theta(x)-f(x)
&\leq
Lh-\frac{Lh}{2}
+
\frac{Lh}{2}
\\
&=
Lh.
\label{eq:rate_upper_bound}
\end{align}

We next establish the lower bound. Consider any expert
$r\in\{1,\ldots,Q\}$ and write
\begin{align}
d_r
\coloneqq
\|x-c_r\|_2.
\end{align}
By Lipschitz continuity,
\begin{align}
f(c_r)
\geq
f(x)-Ld_r.
\end{align}
Therefore,
\begin{align}
f(c_r)-\frac{Lh}{2}
+
\frac{L}{2h}d_r^2
-f(x)
&\geq
-Ld_r-\frac{Lh}{2}
+
\frac{L}{2h}d_r^2
\\
&=
\frac{L}{2h}(d_r-h)^2-Lh
\\
&\geq
-Lh.
\end{align}
Hence every expert satisfies
\begin{align}
f(c_r)-\frac{Lh}{2}
+
\frac{L}{2h}\|x-c_r\|_2^2
\geq
f(x)-Lh.
\end{align}
Taking the minimum over all experts preserves the bound:
\begin{align}
F_\theta(x)
\geq
f(x)-Lh.
\label{eq:rate_lower_bound}
\end{align}

Combining~\Eqref{eq:rate_upper_bound} and
\Eqref{eq:rate_lower_bound} yields
\begin{align}
\lvert F_\theta(x)-f(x)\rvert
\leq
Lh.
\end{align}
Since $x\in\mathcal{X}$ was arbitrary,
\begin{align}
\sup_{x\in\mathcal{X}}
\lvert F_\theta(x)-f(x)\rvert
\leq
Lh.
\end{align}

It remains to establish the corresponding preimage bounds. The uniform
approximation estimate implies
\begin{align}
f(x)-Lh
\leq
F_\theta(x)
\leq
f(x)+Lh
\qquad
\forall x\in\mathcal{X}.
\label{eq:rate_pointwise_bound}
\end{align}
Fix an arbitrary target level $g\in\mathbb{R}$.

If
\begin{align}
x\in\mathcal{P}_f(g-Lh),
\end{align}
then $f(x)\leq g-Lh$, and~\Eqref{eq:rate_pointwise_bound} gives
\begin{align}
F_\theta(x)
\leq
f(x)+Lh
\leq
g.
\end{align}
Thus,
\begin{align}
\mathcal{P}_f(g-Lh)
\subseteq
\mathcal{P}_{F_\theta}(g).
\end{align}

Conversely, if
\begin{align}
x\in\mathcal{P}_{F_\theta}(g),
\end{align}
then $F_\theta(x)\leq g$. Again using~\Eqref{eq:rate_pointwise_bound},
\begin{align}
f(x)
\leq
F_\theta(x)+Lh
\leq
g+Lh,
\end{align}
and therefore
\begin{align}
\mathcal{P}_{F_\theta}(g)
\subseteq
\mathcal{P}_f(g+Lh).
\end{align}

Since $g$ was arbitrary, we conclude that
\begin{align}
\mathcal{P}_f(g-Lh)
\subseteq
\mathcal{P}_{F_\theta}(g)
\subseteq
\mathcal{P}_f(g+Lh)
\qquad
\forall g\in\mathbb{R}.
\end{align}
This proves the theorem.
\end{proof}

\clearpage

\begin{proof}[Proof of Corollary~\ref{cor:expert_rate}]
Let
\begin{align}
h_Q(\mathcal{X})
\coloneqq
\inf_{c_1,\ldots,c_Q\in\mathcal{X}}
\sup_{x\in\mathcal{X}}
\min_{r=1,\ldots,Q}
\|x-c_r\|_2
\end{align}
denote the optimal covering radius of $\mathcal{X}$ using $Q$ centers.

Fix any $\eta>0$. By the definition of the infimum, there exist centers
$c_1,\ldots,c_Q\in\mathcal{X}$ such that
\begin{align}
\sup_{x\in\mathcal{X}}
\min_{r=1,\ldots,Q}
\|x-c_r\|_2
\leq
h_Q(\mathcal{X})+\eta.
\end{align}
Applying Theorem~\ref{thm:approximation_rate} with
\begin{align}
h
=
h_Q(\mathcal{X})+\eta
\end{align}
therefore yields a predictor
$F_\theta\in\mathcal{F}_Q$ satisfying
\begin{align}
\sup_{x\in\mathcal{X}}
\lvert F_\theta(x)-f(x)\rvert
\leq
L\left(h_Q(\mathcal{X})+\eta\right).
\end{align}
Hence,
\begin{align}
\inf_{F_\theta\in\mathcal{F}_Q}
\sup_{x\in\mathcal{X}}
\lvert F_\theta(x)-f(x)\rvert
\leq
L\left(h_Q(\mathcal{X})+\eta\right).
\end{align}
Since $\eta>0$ was arbitrary, letting $\eta\downarrow0$ gives
\begin{align}
\inf_{F_\theta\in\mathcal{F}_Q}
\sup_{x\in\mathcal{X}}
\lvert F_\theta(x)-f(x)\rvert
\leq
L h_Q(\mathcal{X}).
\label{eq:covering_radius_rate}
\end{align}

Now suppose that the covering numbers of $\mathcal{X}$ satisfy
\begin{align}
N(\mathcal{X},h)
\leq
C_{\mathcal{X}}h^{-d}.
\label{eq:covering_number_assumption}
\end{align}
Choose
\begin{align}
h
=
\left(
\frac{C_{\mathcal{X}}}{Q}
\right)^{1/d}.
\end{align}
Then~\Eqref{eq:covering_number_assumption} gives
\begin{align}
N(\mathcal{X},h)
\leq
C_{\mathcal{X}}
\left(
\frac{C_{\mathcal{X}}}{Q}
\right)^{-1}
=
Q.
\end{align}
Thus, $\mathcal{X}$ can be covered by at most $Q$ Euclidean balls of radius $h$, and therefore
\begin{align}
h_Q(\mathcal{X})
\leq
\left(
\frac{C_{\mathcal{X}}}{Q}
\right)^{1/d}.
\end{align}
Substituting this bound into~\Eqref{eq:covering_radius_rate} yields
\begin{align}
\inf_{F_\theta\in\mathcal{F}_Q}
\sup_{x\in\mathcal{X}}
\lvert F_\theta(x)-f(x)\rvert
\leq
L
\left(
\frac{C_{\mathcal{X}}}{Q}
\right)^{1/d}.
\end{align}
Consequently,
\begin{align}
\inf_{F_\theta\in\mathcal{F}_Q}
\|F_\theta-f\|_\infty
=
O\left(Q^{-1/d}\right),
\end{align}
which proves the corollary.
\end{proof}

\clearpage

\begin{proof}[Proof of Theorem~\ref{thm:certified_optimization}]
The uniform approximation bound implies that, for every $x\in\mathcal{X}$,
\begin{align}
f(x)-\varepsilon
\leq
F_\theta(x)
\leq
f(x)+\varepsilon.
\label{eq:certified_pointwise_bound}
\end{align}

We first establish the corresponding inclusion of feasible sets. Let
\begin{align}
x
\in
\mathcal{P}_{F_\theta}(g-\varepsilon).
\end{align}
Then
\begin{align}
F_\theta(x)
\leq
g-\varepsilon.
\end{align}
Using~\Eqref{eq:certified_pointwise_bound},
\begin{align}
f(x)
\leq
F_\theta(x)+\varepsilon
\leq
g,
\end{align}
and hence
\begin{align}
x
\in
\mathcal{P}_f(g).
\end{align}
Therefore,
\begin{align}
\mathcal{P}_{F_\theta}(g-\varepsilon)
\subseteq
\mathcal{P}_f(g).
\label{eq:certified_inner_inclusion}
\end{align}

Conversely, let
\begin{align}
x
\in
\mathcal{P}_f(g).
\end{align}
Then $f(x)\leq g$, and~\Eqref{eq:certified_pointwise_bound} gives
\begin{align}
F_\theta(x)
\leq
f(x)+\varepsilon
\leq
g+\varepsilon.
\end{align}
Thus,
\begin{align}
x
\in
\mathcal{P}_{F_\theta}(g+\varepsilon),
\end{align}
so
\begin{align}
\mathcal{P}_f(g)
\subseteq
\mathcal{P}_{F_\theta}(g+\varepsilon).
\label{eq:certified_outer_inclusion}
\end{align}

Combining~\Eqref{eq:certified_inner_inclusion} and
\Eqref{eq:certified_outer_inclusion} yields
\begin{align}
\mathcal{P}_{F_\theta}(g-\varepsilon)
\subseteq
\mathcal{P}_f(g)
\subseteq
\mathcal{P}_{F_\theta}(g+\varepsilon).
\label{eq:certified_sandwich_proof}
\end{align}

Since
\begin{align}
x_{\mathrm{in}}^\star
\in
\mathcal{P}_{F_\theta}(g-\varepsilon),
\end{align}
the first inclusion in~\Eqref{eq:certified_sandwich_proof} immediately gives
\begin{align}
x_{\mathrm{in}}^\star
\in
\mathcal{P}_f(g),
\end{align}
so $x_{\mathrm{in}}^\star$ is feasible for the ground-truth constrained problem.

Next, define
\begin{align}
v_{\mathrm{in}}
&\coloneqq
\inf_{x\in\mathcal{P}_{F_\theta}(g-\varepsilon)} J(x),
\\
v_f
&\coloneqq
\inf_{x\in\mathcal{P}_f(g)} J(x),
\\
v_{\mathrm{out}}
&\coloneqq
\inf_{x\in\mathcal{P}_{F_\theta}(g+\varepsilon)} J(x).
\end{align}
Because minimization over a larger feasible set cannot increase the optimal value, the inclusions in~\Eqref{eq:certified_sandwich_proof} imply
\begin{align}
v_{\mathrm{out}}
\leq
v_f
\leq
v_{\mathrm{in}}.
\label{eq:certified_value_order}
\end{align}

By definition of $x_{\mathrm{in}}^\star$,
\begin{align}
J(x_{\mathrm{in}}^\star)
=
v_{\mathrm{in}}.
\end{align}
Hence,
\begin{align}
J(x_{\mathrm{in}}^\star)-v_f
=
v_{\mathrm{in}}-v_f
\geq
0,
\end{align}
where the inequality follows from~\Eqref{eq:certified_value_order}. Moreover, since
$v_f\geq v_{\mathrm{out}}$,
\begin{align}
v_{\mathrm{in}}-v_f
\leq
v_{\mathrm{in}}-v_{\mathrm{out}}.
\end{align}
Therefore,
\begin{align}
0
\leq
J(x_{\mathrm{in}}^\star)
-
\inf_{x\in\mathcal{P}_f(g)} J(x)
\leq
\inf_{x\in\mathcal{P}_{F_\theta}(g-\varepsilon)} J(x)
-
\inf_{x\in\mathcal{P}_{F_\theta}(g+\varepsilon)} J(x),
\end{align}
which proves the stated suboptimality certificate.
\end{proof}

\clearpage

\begin{proof}[Proof of Lemma~\ref{lem:quadratic_universal}]
The use of finite minima of quadratic basis functions is related to
min-of-quadratics approximation constructions based on semiconcave and
semiconvex representations; see, e.g.,~\citet{dower2025coverage}. We give a
direct construction for the restricted \method subclass considered here.

Let $\varepsilon>0$ and set
\begin{align}
\eta
\coloneqq
\frac{\varepsilon}{3}.
\end{align}
Since $\mathcal{X}$ is compact and $f\in C(\mathcal{X})$, the function $f$ is
uniformly continuous. Hence, there exists $\rho>0$ such that
\begin{align}
\|x-y\|_2<\rho
\quad\Longrightarrow\quad
\lvert f(x)-f(y)\rvert<\eta
\qquad
\forall x,y\in\mathcal{X}.
\label{eq:uat_uniform_continuity}
\end{align}

Compactness also implies that $f$ attains its minimum and maximum on
$\mathcal{X}$. Define
\begin{align}
m
\coloneqq
\min_{x\in\mathcal{X}} f(x),
\qquad
M
\coloneqq
\max_{x\in\mathcal{X}} f(x),
\qquad
\Delta
\coloneqq
M-m.
\end{align}
Choose $\lambda>0$ such that
\begin{align}
\lambda\rho^2>\Delta.
\label{eq:uat_lambda}
\end{align}
Next, choose $\delta>0$ such that
\begin{align}
\delta<\rho,
\qquad
\lambda\delta^2<\eta.
\label{eq:uat_delta}
\end{align}

Since $\mathcal{X}$ is compact, there exists a finite $\delta$-net
\begin{align}
\{c_1,\ldots,c_Q\}
\subseteq
\mathcal{X}
\end{align}
such that, for every $x\in\mathcal{X}$, there exists at least one
$r\in\{1,\ldots,Q\}$ satisfying
\begin{align}
\|x-c_r\|_2<\delta.
\label{eq:uat_cover}
\end{align}
For each center, set
\begin{align}
\beta_r
\coloneqq
f(c_r),
\end{align}
and define
\begin{align}
F_\theta(x)
=
\min_{r=1,\ldots,Q}
\left[
f(c_r)+\lambda\|x-c_r\|_2^2
\right].
\label{eq:uat_constructed_predictor}
\end{align}
This is a special case of \method with
\begin{align}
A_r=I,
\qquad
\phi_r(d)=\lambda d^2
\end{align}
for every expert $r$.

We now bound the approximation error at an arbitrary
$x\in\mathcal{X}$.

\textbf{Upper bound.}
By~\Eqref{eq:uat_cover}, there exists a center $c_r$ such that
\begin{align}
\|x-c_r\|_2<\delta.
\end{align}
Since $\delta<\rho$, uniform continuity in
\Eqref{eq:uat_uniform_continuity} gives
\begin{align}
f(c_r)
<
f(x)+\eta.
\end{align}
Using this expert as a candidate in the minimum and applying
\Eqref{eq:uat_delta},
\begin{align}
F_\theta(x)
&\leq
f(c_r)+\lambda\|x-c_r\|_2^2
\\
&<
f(x)+\eta+\lambda\delta^2
\\
&<
f(x)+2\eta.
\label{eq:uat_upper}
\end{align}

\textbf{Lower bound.}
Consider any expert $r\in\{1,\ldots,Q\}$. We distinguish two cases.

If
\begin{align}
\|x-c_r\|_2<\rho,
\end{align}
then uniform continuity implies
\begin{align}
f(c_r)
>
f(x)-\eta.
\end{align}
Since the quadratic term is nonnegative,
\begin{align}
f(c_r)+\lambda\|x-c_r\|_2^2
>
f(x)-\eta.
\label{eq:uat_lower_near}
\end{align}

If instead
\begin{align}
\|x-c_r\|_2\geq\rho,
\end{align}
then, using $f(c_r)\geq m$ and~\Eqref{eq:uat_lambda},
\begin{align}
f(c_r)+\lambda\|x-c_r\|_2^2
&\geq
m+\lambda\rho^2
\\
&>
m+\Delta
\\
&=
M
\\
&\geq
f(x).
\label{eq:uat_lower_far}
\end{align}

Thus, in either case,
\begin{align}
f(c_r)+\lambda\|x-c_r\|_2^2
>
f(x)-\eta.
\end{align}
Since this holds for every expert, taking the minimum over
$r=1,\ldots,Q$ yields
\begin{align}
F_\theta(x)
>
f(x)-\eta.
\label{eq:uat_lower}
\end{align}

Combining~\Eqref{eq:uat_upper} and~\Eqref{eq:uat_lower},
\begin{align}
f(x)-\eta
<
F_\theta(x)
<
f(x)+2\eta.
\end{align}
Therefore,
\begin{align}
\lvert F_\theta(x)-f(x)\rvert
<
2\eta
=
\frac{2\varepsilon}{3}
<
\varepsilon.
\end{align}
Since $x\in\mathcal{X}$ was arbitrary,
\begin{align}
\sup_{x\in\mathcal{X}}
\lvert F_\theta(x)-f(x)\rvert
<
\varepsilon.
\end{align}
Hence, the restricted isotropic quadratic subclass is dense in
$C(\mathcal{X})$ under the uniform norm.
\end{proof}

\clearpage

\section{Details on the experiments}\label{sec:experiment_supplement}

\subsection{Data-generating processes and datasets}
\label{app:dgp_data}

We evaluate the methods on three complementary regression-to-preimage
problems. The first two benchmarks provide analytic ground truth and allow
dense evaluation of increasingly challenging nonconvex sublevel sets, whereas
the third replaces the analytic data-generating process by a nonlinear AC
power-flow simulator. Table~\ref{tab:dgp_data_summary} summarizes the domains,
data construction, and ground-truth evaluation used throughout the
experiments.

\begin{table}[h!]
    \centering
    \begin{adjustbox}{max width=\textwidth}
    \begin{tabular}{l|l|l}
        \toprule
        \textbf{Benchmark} & \textbf{Setting} & \textbf{Configuration} \\
        \midrule

        \multirow{7}{*}{Six-Hump Camel}
        & Input dimension / domain
        & $2$; $x\in[-2,2]\times[-1.5,1.5]$
        \\
        & Model coordinates
        & $u=(x_1/2,\,2x_2/3)\in[-1,1]^2$
        \\
        & Train / validation / test
        & $100\,000\,/\,5\,000\,/\,5\,000$
        \\
        & Preimage ground truth
        & Analytic $f_{\mathrm{SH}}$ evaluated on an independent
          $1201\times1201$ grid
        \\
        & Thresholds
        & $g\in\{-0.8,-0.1,0.4,0.9,2.15\}$
        \\
        \midrule

        \multirow{7}{*}{Complex powers}
        & Input dimension / domain
        & $2$; $\{x\in\mathbb{R}^2:\|x\|_2\leq1\}$
        \\
        & Powers
        & $p\in\{4,6,8,10,12\}$
        \\
        & Train / validation / test
        & $16\,384\,/\,4\,096\,/\,8\,192$ for each $(p,s)$
        \\
        & Sampling
        & Independent uniform-in-area samples from the unit disk
        \\
        & Preimage ground truth
        & Analytic $f_p$ on an independent $801\times801$ grid;
          $19$ training-label quantile thresholds
        \\
        \midrule

        \multirow{7}{*}{Pandapower}
        & Input dimension / domain
        & $5$ renewable injections; $x\in[0,25]^5$ MW
        \\
        & Model coordinates
        & $u=x/25\in[0,1]^5$
        \\
        & Physical system
        & IEEE-30 network; renewable buses $\{7,17,25,2,15\}$
        \\
        & Dataset
        & $20\,000$ deterministic scrambled Sobol designs
        \\
        & Train / validation / test
        & $12\,000\,/\,4\,000\,/\,4\,000$
        \\

        \bottomrule
    \end{tabular}
    \end{adjustbox}
    \caption{
        Data-generating processes and datasets used in the experiments.
        The synthetic benchmarks admit exact analytic labels and dense
        ground-truth preimage evaluation. For Pandapower, labels and physical
        feasibility are obtained from the frozen nonlinear AC power-flow
        simulation.
    }
    \label{tab:dgp_data_summary}
\end{table}

\paragraph{Six-Hump Camel.}
The Six-Hump Camel function provides a low-dimensional but strongly nonconvex benchmark with a multimodal landscape and nontrivial sublevel-set geometry. Its two-dimensional domain additionally permits dense visualization of the true preimage, which makes it useful for studying both the geometric capacity of \method and post-hoc preimage extraction.

For $x=(x_1,x_2)$, the regression target is
\begin{equation}
    f_{\mathrm{SH}}(x)
    =
    \left(
        4-2.1x_1^2+\frac{x_1^4}{3}
    \right)x_1^2
    +x_1x_2
    +\left(-4+4x_2^2\right)x_2^2 .
    \label{eq:six_hump_dgp}
\end{equation}
Supervised labels are obtained by evaluating Eq.~\eqref{eq:six_hump_dgp} directly at these inputs; hence the synthetic targets are noise-free. Inputs are affinely mapped from the physical rectangle $[-2,2]\times[-1.5,1.5]$ to $[-1,1]^2$ before training, while the target is left on its original scale.

Forward prediction is evaluated on the fixed $5\,000$-point test set. Separately, geometric quantities are computed on an independent $1201\times1201$ grid over the full physical rectangle. At threshold $g$, the ground-truth membership of every grid point is obtained directly as 
\begin{equation}
    \mathbf{1}\!\left\{f_{\mathrm{SH}}(x)\leq g\right\},
\end{equation}
rather than from sampled or learned labels. This grid is used for the reported ground-truth preimage IoU and for the independent auditing of extracted preimages.

\begin{figure}[h]
\vspace{-0.2cm}
\centering
\includegraphics[width=\textwidth, trim=0cm 0cm 0cm 0cm, clip]{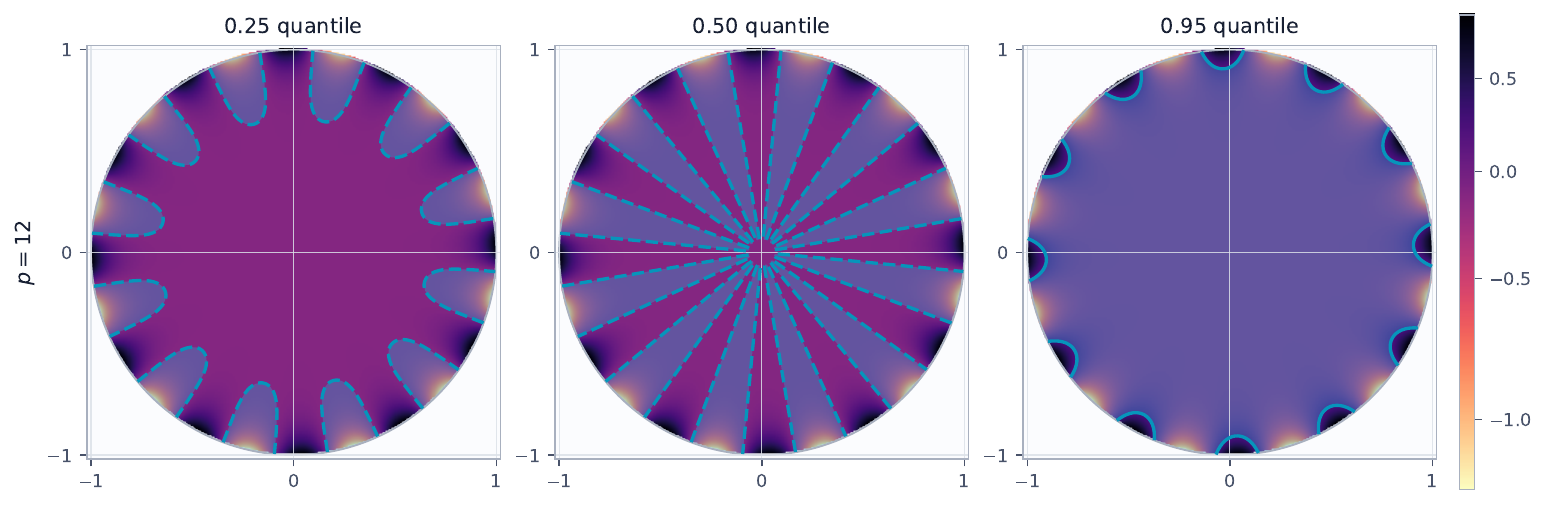}
\vspace{-0.7cm}
\caption{Complex power induces highly multimodel, disconnected preimages. Shown are the preimages at different target level quantiles. \method can approximate these arbitrarily close.}
\label{fig:complex_squaring}
\vspace{-0.4cm}
\end{figure}

\paragraph{Complex powers.}
The complex-power family is a controlled stress test in which the difficulty of the inverse geometry increases while the underlying data-generating process remains exactly known (see Figure~\ref{fig:complex_squaring}). Increasing the power $p$ produces progressively more oscillatory and multimodal sublevel sets, which allows forward approximation and preimage recovery to be studied separately under increasing geometric complexity.

Writing $z=x_1+\mathrm{i}x_2$, the raw response is
\begin{equation}
    f_p(x)
    =
    \operatorname{Re}\!\left(
        e^{-\mathrm{i}\alpha}z^p
    \right),
    \qquad
    \alpha=25^\circ,
    \qquad
    p\in\{4,6,8,10,12\}.
    \label{eq:complex_power_dgp}
\end{equation}
The supervised regression target contains the fixed offset
\begin{equation}
    y_p(x)=f_p(x)-\delta,
    \qquad
    \delta=0.13\cos(\alpha)\approx0.1178200.
    \label{eq:complex_power_offset}
\end{equation}
Accordingly, a threshold $g$ in model-output coordinates corresponds to
\begin{equation}
    g_{\mathrm{raw}}=g+\delta
\end{equation}
for the analytic DGP.

For every $p$, training, validation, and test inputs are sampled independently and uniformly in area from the unit disk. Their labels are obtained by direct evaluation of \Eqref{eq:complex_power_dgp} and \Eqref{eq:complex_power_offset}.

Forward RMSE is computed on the corresponding $8\,192$ held-out test points. Preimage accuracy is evaluated independently on an $801\times801$ Cartesian grid restricted to the unit disk. For each run, $19$ thresholds corresponding to the $5\%,10\%,\ldots,95\%$ quantiles of the training labels are considered, and true grid membership is obtained directly from the analytic condition 
\begin{equation}
    f_p(x)\leq g_{\mathrm{raw}}.
\end{equation}
Thus the backward/preimage evaluation does not reuse the supervised test sample.

\paragraph{Pandapower.}
The Pandapower benchmark \citep{thurner2018pandapower} moves from analytic functions to a physics-based power-system simulation. It represents a realistic application setting: the learned constraint approximates AC-grid security under five
simultaneous renewable-power injections, and optimized designs are subsequently checked again by the nonlinear AC power-flow model.

We use the IEEE-30 network and place five static renewable generators at buses $7$, $17$, $25$, $2$, and $15$, in this order, with zero reactive-power injection. For the physical design vector
\begin{equation}
    x=(P_1,\ldots,P_5)\in[0,25]^5\;\mathrm{MW},
\end{equation}
a Newton--Raphson AC power flow is solved using flat initialization and at most $20$ iterations. For every converged design, the supervised scalar target is the maximum normalized network-security loading,
\begin{equation}
    f_{\mathrm{AC}}(x)
    =
    \max\left\{
        \max_{\ell}
        \frac{\mathrm{loading}_{\ell}}{100},
        \;
        \max_{t}
        \frac{\mathrm{loading}_{t}}{100},
        \;
        \max_{b}
        \frac{V_b}{V_b^{\max}},
        \;
        \max_{b}
        \frac{V_b^{\min}}{V_b}
    \right\},
    \label{eq:pandapower_target}
\end{equation}
where a transformer term is omitted when no corresponding transformer result is present. The physical AC-feasibility condition is
\begin{equation}
    f_{\mathrm{AC}}(x)\leq1.
    \label{eq:pandapower_feasible}
\end{equation}

The dataset consists of $20\,000$ points from a deterministic scrambled five-dimensional Sobol desig. Each point is evaluated once by the frozen AC power-flow simulation to generate its regression target and feasibility label; all $20\,000$ simulations in the retained dataset converged. The selected $[0,25]^5$ MW domain deliberately intersects the feasible boundary: approximately $49.25\%$ of the resulting designs satisfy \Eqref{eq:pandapower_feasible}. A single permutation assigns $12\,000$, $4\,000$, and $4\,000$ observations to training, validation, and test sets, and these exact splits are shared by all model seeds.

In contrast to the two analytic benchmarks, there is no dense high-dimensional ground-truth grid. Held-out prediction performance is therefore evaluated on the $4\,000$ independently reserved simulator-labeled
test designs. For downstream optimization, each returned candidate is converted back to MW and re-evaluated by the same frozen AC power-flow simulation; this simulator evaluation determines physical AC feasibility, whereas any global optimality certificate refers only to the corresponding learned constraint.


\subsection{\method architectures}
\label{app:trio_implementation}

\method{} is implemented as a minimum envelope of $Q$ ellipsoidal experts.
Each expert combines a learned center and positive-definite quadratic form with
a strictly increasing scalar radial function. This separation is the key
implementation property: the radial network is used while learning the scalar
response, but for any fixed target level it can be inverted once and removed,
leaving an explicit union of ellipsoids.

For expert $r$, let
\begin{equation}
    q_r(x)
    =
    (x-c_r)^\top A_r(x-c_r),
    \qquad
    A_r \succ 0,
    \label{eq:trio_quadratic}
\end{equation}
where $c_r$ is the learned center. We parameterize $A_r$ through a Cholesky
factor, which enforces positive definiteness throughout training. The generic
predictor takes the form
\begin{equation}
    F_\theta(x)
    =
    \min_{r=1,\ldots,Q}
    \left[
        \beta_r+\phi_r\!\left(d_r(x)\right)
    \right],
    \qquad
    d_r(x)=\sqrt{q_r(x)},
    \label{eq:trio_forward}
\end{equation}
where $\beta_r$ is an expert-specific offset and $\phi_r$ is constrained to be
strictly increasing. The experiments use two radial backbones, summarized in
Table~\ref{tab:trio_variants}.

\begin{table}[h!]
    \centering
    \begin{adjustbox}{max width=\textwidth}
    \begin{tabular}{l|l|c|c|l|c}
        \toprule
        \textbf{Benchmark}
        & \textbf{Radial backbone}
        & \textbf{$Q$}
        & \textbf{Units per region}
        & \textbf{Initialization}
        & \textbf{Parameters}
        \\
        \midrule

        Complex powers
        & Broken-Power
        & $128$
        & --
        & Standard
        & $1\,152$
        \\

        Six-Hump Camel
        & Neural
        & $\{16,32,64,128,256\}$
        & $8$
        & Mixed low-target FPS
        & $31\times Q$
        \\

        Pandapower
        & Neural
        & $128$
        & $32$
        & Mixed low-target FPS
        & $15\,104$
        \\

        \bottomrule
    \end{tabular}
    \end{adjustbox}
    \caption{
        \method{} configurations used in the experiments.
        The Broken-Power model is used for the complex-power benchmark,
        whereas the neural radial backbone is used for Six-Hump
        Camel and Pandapower. 
    }
    \label{tab:trio_variants}
\end{table}

\paragraph{Exact compilation.}
For a target level $g$, only experts with
\begin{equation}
    \beta_r\leq g
\end{equation}
can contribute to the sublevel set. Since $\phi_r$ is strictly increasing,
each active expert admits a unique radius
\begin{equation}
    R_r(g)
    =
    \phi_r^{-1}\!\left(g-\beta_r\right).
    \label{eq:trio_radial_inverse}
\end{equation}
Consequently,
\begin{equation}
    \left\{
        x:F_\theta(x)\leq g
    \right\}
    =
    \bigcup_{r:\beta_r\leq g}
    \left\{
        x:
        (x-c_r)^\top A_r(x-c_r)
        \leq R_r(g)^2
    \right\}.
    \label{eq:trio_compilation}
\end{equation}
Thus, radial inversion is performed only once per active expert and target.
Afterwards, preimage membership and downstream optimization use the detached
quadratic representation in \Eqref{eq:trio_compilation}; no radial-network
evaluation is required.

For the neural radial laws, the inverse in
\Eqref{eq:trio_radial_inverse} is obtained by a one-dimensional bracketed
root solve. In the Pandapower implementation, the tolerance is $10^{-12}$.
The complex-power Broken-Power backbone is likewise strictly monotone and is
inverted once for each active expert before projection.

\paragraph{Radial backbones.}
For the complex-power benchmark, we use the Broken-Power backbone with
$Q=128$ experts and standard initialization. This is the same model for all
powers $p\in\{4,6,8,10,12\}$ and contains $1\,152$ trainable parameters.

For Six-Hump Camel, each expert instead uses an anchored monotone Wide-Tanh
radial map with eight Tanh units per region,
\begin{equation}
    \phi_r(d)
    =
    a_r d
    +
    \sum_{j=1}^{8}
    v_{rj}
    \left[
        \tanh\!\left(
            w_{rj}(d-\kappa_{rj})
        \right)
        -
        \tanh\!\left(
            -w_{rj}\kappa_{rj}
        \right)
    \right],
    \label{eq:trio_wide_tanh_six_hump}
\end{equation}
where $a_r$, $v_{rj}$, and $w_{rj}$ are constrained to be positive.
The subtraction term anchors the radial map at
$\phi_r(0)=0$, while the positive linear component prevents saturation and
ensures strict monotonicity. The implementation uses the numerically
stabilized radial distance
\begin{equation}
    d_{\varepsilon,r}(x)
    =
    \sqrt{q_r(x)+\varepsilon}-\sqrt{\varepsilon},
    \qquad
    \varepsilon=10^{-12}.
    \label{eq:trio_stable_distance}
\end{equation}
Hence, if the inverse radial value is $R_r$, the corresponding detached
quadratic constraint is
\begin{equation}
    q_r(x)
    \leq
    \left(R_r+\sqrt{\varepsilon}\right)^2-\varepsilon.
    \label{eq:trio_stable_radius}
\end{equation}

The Pandapower model uses the same monotone neural-radial principle with $32$ Tanh units per expert. For
$v_r=\log(1+q_r)$, its expert score is
\begin{equation}
    \beta_r
    +
    a_r v_r
    +
    \sum_{j=1}^{32}
    w_{rj}
    \left[
        \tanh(s_{rj}v_r+t_{rj})
        -
        \tanh(t_{rj})
    \right],
    \label{eq:trio_wide_tanh_pandapower}
\end{equation}
with $a_r$, $w_{rj}$, and $s_{rj}$ constrained to be positive.
This parameterization is again anchored and strictly increasing in the
quadratic radius. Once the scalar inverse $v_r^\star$ has been found, the
compiled ellipsoid radius follows directly from
\begin{equation}
    q_r(x)\leq \exp(v_r^\star)-1.
    \label{eq:trio_pandapower_radius}
\end{equation}

\paragraph{Mixed low-target initialization.}
For the neural Wide-Tanh variants, we use a training-data-only initialization
designed to place experts both throughout the input domain and in regions of
small target values. Half of the $Q$ centers are selected by deterministic
farthest-point sampling (FPS) over the complete normalized training set. The
remaining half are selected by FPS restricted to
\begin{equation}
    \mathcal{D}_{\mathrm{low}}
    =
    \left\{
        (x_i,y_i):
        y_i\leq q_{0.1}(y_{\mathrm{train}})
    \right\},
    \label{eq:trio_low_target_set}
\end{equation}
where $q_{0.1}$ denotes the empirical $10\%$ target quantile. Given the
previously selected centers, FPS iteratively chooses
\begin{equation}
    c_{k+1}
    =
    \arg\max_{x_i\in\mathcal{D}}
    \min_{j\leq k}
    \|x_i-c_j\|_2^2.
    \label{eq:trio_fps}
\end{equation}
For every selected training point $x_{i(r)}$, we initialize
\begin{equation}
    c_r=x_{i(r)},
    \qquad
    \beta_r=y_{i(r)}.
    \label{eq:trio_beta_initialization}
\end{equation}
The procedure uses only the training split and requires no analytic knowledge
of the underlying data-generating process or simulator. It is used for the
Six-Hump Camel and Pandapower experiments, while the complex-power experiments
retain the standard Broken-Power initialization.

\clearpage

\subsection{Neural baselines}
\label{app:baselines}

We compare \method against standard unconstrained neural predictors and an input convex neural network (ICNN). Whenever architectures are compared at a fixed capacity, baseline widths are chosen to match the number of trainable parameters of \method as closely as possible. The CPWL MLP uses ReLU activations, while the Smooth MLP uses Tanh activations. The ICNN uses Softplus activations and structurally nonnegative hidden-to-hidden and output weights to preserve convexity of its scalar output. The architectures are summarized in Tables~\ref{tab:baseline_architectures} and~\ref{tab:six_hump_parameter_matching}.

\begin{table}[h!]
    \centering
    \begin{adjustbox}{max width=\textwidth}
    \begin{tabular}{l|l|l|c}
        \toprule
        \textbf{Benchmark}
        & \textbf{Model}
        & \textbf{Architecture}
        & \textbf{Parameters}
        \\
        \midrule

        \multirow{4}{*}{Complex powers}
        & \method{}
        & $Q=128$ Broken-Power experts
        & $1\,152$
        \\
        & CPWL MLP
        & $2\rightarrow24\rightarrow42\rightarrow1$, ReLU
        & $1\,165$
        \\
        & Smooth MLP
        & $2\rightarrow24\rightarrow42\rightarrow1$, Tanh
        & $1\,165$
        \\
        & ICNN
        & two hidden layers, width $31$, Softplus
        & $1\,212$
        \\
        \midrule

        \multirow{4}{*}{Pandapower}
        & \method{}
        & $Q=128$ Wide-Tanh experts
        & $15\,104$
        \\
        & CPWL MLP
        & $5\rightarrow117\rightarrow121\rightarrow1$, ReLU
        & $15\,102$
        \\
        & Smooth MLP
        & $5\rightarrow117\rightarrow121\rightarrow1$, Tanh
        & $15\,102$
        \\
        & ICNN
        & two hidden layers, width $116$, Softplus
        & $15\,086$
        \\

        \bottomrule
    \end{tabular}
    \end{adjustbox}
    \caption{
        Parameter-matched predictor architectures for the complex-power and
        Pandapower experiments. Baseline widths are selected to closely match
        the trainable parameter count of \method{}.
    }
    \label{tab:baseline_architectures}
\end{table}

For the Six-Hump Camel capacity study, the parameter budget varies with the
number of \method{} experts. The matched CPWL MLP and ICNN architectures are
therefore adjusted separately at each value of $Q$ (see Table~\ref{tab:six_hump_parameter_matching}).

\begin{table}[h!]
    \centering
    \begin{adjustbox}{max width=\textwidth}
    \begin{tabular}{c|cc|cc|cc}
        \toprule
        \multirow{2}{*}{$Q$}
        & \multicolumn{2}{c|}{\method{}}
        & \multicolumn{2}{c|}{CPWL MLP}
        & \multicolumn{2}{c}{ICNN}
        \\
        \cmidrule(lr){2-3}
        \cmidrule(lr){4-5}
        \cmidrule(lr){6-7}
        & Architecture & Params.
        & Hidden widths & Params.
        & Hidden width & Params.
        \\
        \midrule

        $16$
        & $16$ experts & $496$
        & $(16,25)$ & $499$
        & $19$ & $516$
        \\
        $32$
        & $32$ experts & $992$
        & $(23,37)$ & $995$
        & $28$ & $1\,011$
        \\
        $64$
        & $64$ experts & $1\,984$
        & $(37,48)$ & $1\,984$
        & $41$ & $2\,012$
        \\
        $128$
        & $128$ experts & $3\,968$
        & $(54,68)$ & $3\,971$
        & $59$ & $3\,956$
        \\
        $256$
        & $256$ experts & $7\,936$
        & $(73,103)$ & $7\,945$
        & $85$ & $7\,908$
        \\

        \bottomrule
    \end{tabular}
    \end{adjustbox}
    \caption{
        Parameter matching for the Six-Hump Camel capacity experiment.
        The CPWL MLP has two ReLU hidden layers with the listed widths; the
        ICNN has two Softplus hidden layers of the listed common width.
    }
    \label{tab:six_hump_parameter_matching}
\end{table}

\subsection{Training hyperparameters}
\label{app:training}

All predictors, including \method and the neural baselines, are trained by direct scalar mean-squared error using Adam in double precision. We use the same core optimization settings across all three benchmarks (see Table~\ref{tab:training_hyperparameters}).

\begin{table}[h!]
    \centering
    \begin{adjustbox}{max width=\textwidth}
    \begin{tabular}{l|c|c|c|c|c|c}
        \toprule
        \textbf{Setting}
        & Loss
        & Optimizer
        & Learning rate
        & Batch size
        & Precision
        & Validation interval
        \\
        \midrule
        \textbf{Configuration}
        & MSE
        & Adam
        & $10^{-3}$
        & $1{,}024$
        & float64
        & $500$ steps
        \\
        \bottomrule
    \end{tabular}
    \end{adjustbox}
    \caption{
        Shared training hyperparameters used across the experiments.
    }
    \label{tab:training_hyperparameters}
\end{table}

For \method{}, the initial optimization phase uses a differentiable soft
minimum over experts, with temperature cosine-annealed from $0.20$ to $0.01$
over the first $120{,}000$ steps. Training then switches to the literal hard
minimum in \Eqref{eq:trio_forward}. Validation always evaluates the hard
predictor, and all reported results use the checkpoint with the best validation
performance. For comparability, the baseline models use the same $120{,}000$-step shield before early stopping and checkpoint eligibility, while training their standard forward maps throughout.


\subsection{Preimage extraction and downstream optimization}
\label{app:inverse_optimization}

The experiments distinguish between the learned scalar predictor and the
procedure used to recover or optimize over its sublevel set. For \method{},
\Eqref{eq:trio_compilation} gives the complete learned preimage explicitly as
a finite union of ellipsoids. The neural baselines instead require a
post-hoc extraction or optimization procedure. Table~\ref{tab:inverse_methods}
summarizes the methods used in the experiments.

\begin{table}[h!]
    \centering
    \begin{adjustbox}{max width=\textwidth}
    \begin{tabular}{l|l|l|l}
        \toprule
        \textbf{Predictor}
        & \textbf{Inverse procedure}
        & \textbf{Used for}
        & \textbf{Learned-model guarantee}
        \\
        \midrule

        \method{}
        & Compiled ellipsoid union
        & Preimage extraction
        & Exact complete preimage
        \\

        \method{}
        & Ellipsoid-wise optimization
        & Downstream optimization
        & Global optimum over learned preimage
        \\
        \midrule

        CPWL MLP
        & PREMAP2
        & Preimage extraction
        & Certified inner approximation
        \\

        CPWL MLP
        & Gurobi
        & Downstream optimization
        & Global on \texttt{OPTIMAL}
        \\

        Smooth MLP
        & IPOPT
        & Downstream optimization
        & Local
        \\

        Smooth MLP
        & SCIP
        & Downstream optimization
        & Global on \texttt{optimal}
        \\

        ICNN
        & CLARABEL
        & Downstream optimization
        & Convex global solve on \texttt{optimal}
        \\

        \bottomrule
    \end{tabular}
    \end{adjustbox}
    \caption{
        Preimage extraction and downstream optimization procedures.
        Guarantees refer to the frozen learned predictor, not to the analytic
        data-generating process or the Pandapower AC equations.
    }
    \label{tab:inverse_methods}
\end{table}

\paragraph{Six-Hump preimage extraction.}
The quantitative extraction experiment uses the model parameter-matched at $Q=64$ for
$g=0.4$. For \method, the learned sublevel set is obtained directly from
\Eqref{eq:trio_compilation}; hence its model-relative extraction coverage is
exactly one. We compare this representation against PREMAP2 applied to the
parameter-matched CPWL MLP.

PREMAP2 is run in its input space splitting mode and only its certified inner
approximation is used. Since PREMAP2 operates as a classifier, the scalar
sublevel-set condition is represented by the two-output map
\begin{equation}
    G(x)
    =
    \begin{bmatrix}
        g-F_{\mathrm{CPWL}}(x) \\
        0
    \end{bmatrix},
    \label{eq:premap_two_logit}
\end{equation}
for which classification into the first output is equivalent to
$F_{\mathrm{CPWL}}(x)\leq g$. We use refinement budgets
\begin{equation}
    B\in\{64,128,256,512,1024,2048\}.
\end{equation}
For the certified inner set $I_B$ and the complete learned CPWL preimage
$S_{\mathrm{CPWL}}(g)$, extraction coverage is evaluated as
\begin{equation}
    C_B
    =
    \frac{|I_B|}
         {|S_{\mathrm{CPWL}}(g)|}
    =
    \operatorname{IoU}
    \!\left(
        I_B,S_{\mathrm{CPWL}}(g)
    \right),
    \label{eq:premap_coverage}
\end{equation}
where the second equality follows from the certified inclusion
$I_B\subseteq S_{\mathrm{CPWL}}(g)$. All reported certificates are
independently decoded and checked on the $1201\times1201$ evaluation grid.
The reported construction times measure once-per-target preimage extraction,
i.e., \method{} compilation or PREMAP2 refinement, rather than subsequent
point-membership queries.

\paragraph{Complex-power projection.}
For the complex-power benchmark, the downstream problem is the nearest-point
projection
\begin{equation}
    \min_x \|x-q\|_2^2
    \qquad
    \text{s.t.}
    \qquad
    F_\theta(x)\leq g,
    \quad
    \|x\|_2\leq1 .
    \label{eq:complex_projection}
\end{equation}
For every power $p$, we use three target levels whose true sublevel sets have probability masses $0.2$, $0.5$, and $0.8$ under uniform sampling from the unit disk. Twenty query points are drawn for each level conditional on being truly infeasible, yielding $60$ nontrivial projection problems per power. The same queries are used for every predictor and model seed.

For \method, the learned constraint is represented as a finite union of ellipsoids,
\begin{equation}
    \mathcal P_{F_\theta}(g)
    =
    \bigcup_{r\in\mathcal A(g)}
    E_r(g),
    \qquad
    E_r(g)
    =
    \left\{
        x:
        (x-c_r)^\top A_r(x-c_r)
        \leq R_r(g)^2
    \right\}.
\end{equation}
This representation enables direct optimization without evaluating the neural
predictor. For the complex-power projection problem,
\begin{equation}
    \min_x \|x-q\|_2^2
    \qquad
    \mathrm{s.t.}
    \qquad
    x\in\mathcal P_{F_\theta}(g),
\end{equation}
the global solution is obtained by solving the projection problem on each active ellipsoid independently and selecting the candidate with the smallest distance. Projection onto a single ellipsoid is solved analytically through a one-dimensional monotone root-finding problem derived from the KKT conditions. Thus, the complete projection reduces to a finite number of scalar solves over the active experts.

The CPWL MLP is encoded exactly and solved with Gurobi; the Smooth MLP is optimized both with deterministic eight-start local IPOPT and with global SCIP spatial branch-and-bound, with a $30$\,s per-query limit for SCIP. The ICNN sublevel set is convex and is represented in CVXPY and solved using CLARABEL.

Because the complex-power DGP is analytic, the true projection optimum is computed independently rather than using any learned model. Writing a query as $q=\rho e^{\mathrm{i}\psi}$, the boundary of each angular branch can be reduced to a one-dimensional polar problem; all branches, relevant endpoints, and stationary candidates are considered. This reference computation is used only for evaluation and is excluded from all optimizer timings.

\paragraph{Pandapower calibration and optimization.}
The Pandapower experiment optimizes total renewable generation under a learned
AC-security constraint,
\begin{equation}
    \max_{u\in[0,1]^5}
    25\sum_{j=1}^{5}u_j
    \qquad
    \text{s.t.}
    \qquad
    F_\theta(u)\leq\tau_\theta .
    \label{eq:pandapower_optimization}
\end{equation}
The threshold $\tau_\theta$ is calibrated separately for every trained
predictor using the validation set only. For a candidate threshold $\tau$, we
compute the false-feasible rate among truly AC-infeasible validation points,
\begin{equation}
    \operatorname{FFR}_{\mathrm{val}}(\tau)
    =
    \frac{
        \#\{i:
        f_{\mathrm{AC}}(x_i)>1,\,
        F_\theta(u_i)\leq\tau\}
    }{
        \#\{i:f_{\mathrm{AC}}(x_i)>1\}
    },
    \label{eq:pandapower_ffr}
\end{equation}
and choose the largest candidate threshold satisfying
$\operatorname{FFR}_{\mathrm{val}}(\tau)\leq10^{-3}$.
No test-set or downstream-optimization outcome is used for calibration.

For \method, the calibrated constraint again compiles to a union of
ellipsoids. The linear objective in \Eqref{eq:pandapower_optimization} is
optimized exactly over the intersection of each ellipsoid with the box
$[0,1]^5$, and the best solution across active experts is returned. When box constraints
are present, violated coordinates are fixed to their active bounds and the remaining free variables define a lower-dimensional ellipsoidal slice. The same closed-form solution is then applied recursively on the reduced
ellipsoid. The best feasible candidate across all active ellipsoids is the global optimum of the compiled learned union intersected with the design box.

The parameter-matched CPWL MLP is encoded exactly as a mixed-integer linear model and solved by Gurobi. The Smooth MLP is solved using both SCIP spatial branch-and-bound and deterministic eight-start IPOPT, while the ICNN is represented as a convex epigraph problem and solved using CLARABEL. SCIP uses a $30$\,s solver limit. IPOPT is treated as a local method irrespective of agreement among its starts.

Finally, every returned design is converted from normalized coordinates to MW and evaluated by the frozen Pandapower AC power-flow simulation. This evaluation determines physical AC feasibility. In contrast, the ``global'' solver labels in Table~\ref{tab:inverse_methods} refer exclusively to optimality with respect to the corresponding learned constraint.

\subsection{Evaluation conventions}
\label{app:evaluation_software}

All reported multi-seed results are aggregated over seeds $101$--$110$ using the arithmetic mean and sample standard deviation. Uncertainty values in tables and figures denote this sample standard deviation across seeds with an AMD Ryzen 7 Pro CPU and 32GB of RAM.

For preimage evaluation, membership is always evaluated against the relevant ground-truth or learned-model set using the independent geometry grids described in Section~\ref{app:dgp_data}. For \method{}, exactness refers to equality between the neural predictor and its compiled ellipsoidal representation, not to equality with the underlying analytic function or simulator. Similarly, optimization certificates refer to the frozen learned constraint, while physical feasibility in the Pandapower experiment is determined only after
re-evaluation by the AC power-flow simulator.

All experiments use Python implementations with double-precision arithmetic. The main learning experiments use PyTorch. The complex-power and Six-Hump experiments use NumPy/SciPy-based numerical utilities for geometry and
optimization. The Pandapower experiments additionally use the Pandapower AC power-flow solver. Downstream optimization uses Gurobi \citep{gurobi2026} for CPWL MLPs, SCIP \citep{bestuzheva2025scip} and
IPOPT \citep{wachter2006implementation} for Smooth MLP baselines, and CLARABEL \citep{goulart2026clarabel} through CVXPY for ICNNs \citep{amos2017icnn}. The PREMAP2 \citep{bjorklund2025premap2, zhang2025premap} comparison is executed in a separate Python environment due to its different dependency requirements. Details can be found in the attached code base.


\end{document}

%% file: math_commands.tex
\usepackage{amsmath,amsfonts,bm}

\def\eqref#1{equation~\ref{#1}}
\def\Eqref#1{Equation~\ref{#1}}

\def\1{\bm{1}}

\DeclareMathAlphabet{\mathsfit}{\encodingdefault}{\sfdefault}{m}{sl}
\SetMathAlphabet{\mathsfit}{bold}{\encodingdefault}{\sfdefault}{bx}{n}

